\documentclass[11pt]{article}
\usepackage[T1]{fontenc}
\usepackage{lmodern}
\usepackage[margin=1in]{geometry}
\usepackage{amsmath,amssymb,amsthm,mathtools}
\usepackage{enumitem}
\usepackage{array,booktabs,tabularx}
\usepackage{microtype}
\usepackage{graphicx,xcolor}
\usepackage[section]{placeins}
\usepackage[authoryear,round]{natbib}
\usepackage{tikz}
\usetikzlibrary{arrows.meta,positioning,calc,fit,backgrounds}
\usepackage[hidelinks]{hyperref}
\hypersetup{pdftitle={Expansion Counts under Standard A* Tie-Breaking Strategies on the Final Plateau},
  pdfauthor={Alex Fukunaga}}  
\tikzset{
  state/.style={circle,draw,thick,minimum size=7.2mm,inner sep=0pt,font=\small},
  start/.style={state,fill=gray!8},
  goal/.style={state,double,double distance=1.2pt},
  dead/.style={state,fill=gray!12},
  edge/.style={-{Latex[length=2mm]},semithick},
  edgelabel/.style={font=\scriptsize,fill=white,inner sep=1.2pt},
  plateau/.style={draw=gray!60,dashed,rounded corners,inner sep=4pt},
  ell/.style={font=\large},
  note/.style={font=\scriptsize,align=center,inner sep=1pt},
  gridfree/.style={draw=gray!55,fill=white,line width=0.35pt},
  gridbad/.style={draw=gray!55,fill=gray!18,line width=0.35pt},
  gridblock/.style={draw=gray!55,fill=gray!65,line width=0.35pt},
  gridcallout/.style={font=\scriptsize,align=center,fill=white,inner sep=1.5pt}
}
\title{Expansion Counts under Standard A* Tie-Breaking Strategies on the Final Plateau}
\author{Alex Fukunaga \\ The University of Tokyo}

\date{}
\theoremstyle{plain}
\newtheorem{theorem}{Theorem}
\newtheorem{proposition}{Proposition}
\newtheorem{lemma}{Lemma}
\newtheorem{corollary}{Corollary}
\theoremstyle{definition}

\newcommand{\astar}{A$^*$}
\newcommand{\Open}{\ensuremath{\mathrm{OPEN}}}
\newcommand{\Closed}{\ensuremath{\mathrm{CLOSED}}}
\newcommand{\OpenMin}{\ensuremath{\mathrm{OPEN}_{\min}}}
\newcommand{\gstar}{g^*}
\newcommand{\hstar}{h^*}
\newcommand{\Cstar}{C^*}
\newcommand{\FIFO}{\ensuremath{\mathrm{FIFO}}}
\newcommand{\LIFO}{\ensuremath{\mathrm{LIFO}}}
\newcommand{\RAND}{\ensuremath{\mathrm{RANDOM}}}
\newcommand{\LowH}[1]{\ensuremath{[h,#1]}}
\newcommand{\HighH}[1]{\ensuremath{[-h,#1]}}
\newcolumntype{Y}{>{\raggedright\arraybackslash}X}

\usepackage{comment}

\begin{document}
\maketitle

\begin{abstract}
In the A* search algorithm, the tie-breaking strategies for  nodes with the same $f$-value determines which states A* expands on the final $f$-layer.
For nine standard tie-breaking strategies, we show that under a consistent heuristic, every pair has positive-cost instances
favoring each strategy over the other by an arbitrarily large additive expansion gap.
A parameterized unit-cost grid example also gives unbounded expansion-count ratios between low-$h$ with FIFO and LIFO.
In unit-cost search with $h > 0$ at non-goals, exact heuristic values near the goal lead to complementary extremal results: low-$h$ minimizes the number of remaining expansions from a common configuration within the perfect region, while high-$h$ maximizes the total number of expansions when every final-plateau state with $h=1$ is a goal predecessor.
Finally, with the evaluation function $f_{\alpha} = g + \alpha h$, when $h>0$ at non-goals,  every heuristic weight $0 \leq \alpha<1$ eliminates tie-breaking sensitivity, and all tie-breaking strategies expand the same set of states.

\end{abstract}

\section{Introduction}\label{sec:introduction}

The \astar{} search algorithm for finding an optimal-cost path from a start state to a goal state in a graph orders states according to the evaluation function  $f=g+h$, where $g$ is the cost of the current path
from the initial state and $h$ estimates the remaining cost to a goal
\citep{hart1968}.
Since multiple states in \Open{} may have the same minimum $f$-value,
a complete algorithm specification requires a sequence of \emph{tie-breaking} criteria.
A common choice prefers smaller $h$ and resolves any remaining
ties by insertion order (FIFO, LIFO) or by uniform random selection.  On an equal-$f$ tie, preferring smaller $h$ is
equivalent to preferring larger $g$.

For optimal search, previous work has investigated the performance of tie-breaking strategies for \astar. While \astar\ must expand all nodes with $f < C^*$, where $C^*$ is the cost of an optimal solution, the number of nodes expanded in the final plateau (nodes with $f = C^*$) is affected by the tie-breaking strategy. 
\citet{asai2016aaai,asai2017jair} experimentally evaluated the performance of 
FIFO, LIFO, random, high-$g$, and low-$g$ tie-breaking policies and proposed plateau-depth-based rules . \citet{correa2018optimal}  proposed tie-breaking based on cost adaptation.

Despite the long history of \astar{} and the wide acknowledgment that tie-breaking affects search efficiency, to our knowledge, formal analysis of tie-breaking has received relatively little attention.
\citet{correa2018optimal} showed that optimal tie-breaking strategies can expand fewer nodes than some common tie-breaking strategies. However, pairwise relationships among the most commonly used tie-breaking strategies which are based on $g$-values and order of entry into \Open{} have not been systematically analyzed.

We analyze expansion counts under standard tie-breaking strategies based on $h$-values and insertion order into \Open{}. We make three contributions.

The first contribution is a %
pairwise incomparability result.
For nine standard strategies formed from FIFO, LIFO, uniform random
selection, and low-$h$ or high-$h$ criteria, a single parameterized
construction with a consistent heuristic and positive edge costs yields arbitrarily large expansion gaps in both directions.
The deterministic comparisons hold for expanded sets as well as counts;
comparisons involving randomization use exact expectations.
In other words, none of these
nine standard tie-breaking rules dominate the others in general.
Unbounded gaps among standard tie-breaking rules can be found even in familiar domains, as a parameterized example on four-neighbor unit-cost grids with
Manhattan distance and a fixed successor order gives unbounded
additive gaps and expansion-count ratios between \LowH{\FIFO}
and \LowH{\LIFO}. Transposing the grid reverses which strategy
expands fewer states.

The second contribution identifies extremal strategies under a local
accuracy condition on the heuristic.  In unit-cost search with $h > 0$ at non-goals, suppose that $h$ is
perfect on final-plateau states with $h\leq k$.  From a common search
configuration whose minimum eligible $h$-value is $m\leq k$, every low-$h$
strategy performs exactly $m$ further expansions, and is minimal, i.e., no continuation
performs fewer expansions.
The guarantee concerns the \emph{remaining} expansions;
when the condition holds at entry to the final plateau, the continuation guarantee yields global expansion optimality.
Furthermore, under the weaker condition that every
final-plateau state with $h=1$ is a goal predecessor, every high-$h$
strategy attains the maximum \emph{total} expansion count, independently of
its final tie-breaker.  %

Finally, we elucidate how the tie-breaking-dependent search on the final plateau $f=C^*$ arises, by considering search using the weighted evaluation function $f_{\alpha} = g + \alpha h$.  When $h>0$ at non-goals,  every underweighted heuristic search with  weight $0 \leq \alpha<1$ expands the same set of states, regardless of tie-breaking,
The common expanded set for $0<\alpha<1$ contains the states expanded by ordinary \astar{} and is contained in the uniform-cost expanded set, so the disappearance of tie-breaking sensitivity comes from making formerly optional final-layer states compulsory.
Thus, there is a sharp boundary at the standard \astar{} weight $\alpha=1$, 
which  restores a final layer on which tie-breaking can affect the expanded set; indeed, as shown by the first contribution, the resulting expansion gap can be arbitrarily large.

\section{Preliminaries and Previous Expansion Bounds}\label{sec:model}

We begin by describing the graph-search model, tie-breaking, and comparison criteria, then recalling the
classical expansion bounds used in the subsequent analysis.

\subsection{Search model}

A search instance is $I=(V,A,c,s,\mathcal G,h)$, where $(V,A)$ is a finite
directed graph, $c:A\to\mathbb R_{\geq0}$ assigns transition costs,
$s\notin\mathcal G$ is the initial state, and $\mathcal G$ is the set of
goals. The heuristic $h:V\to\mathbb R$ is state-based. Let $\gstar(v)$
be the least cost of a path from $s$ to $v$, and let $\hstar(v)$ be the
least cost from $v$ to a goal; either is $+\infty$ if no such path exists.
We assume that the instance is solvable and write
$\Cstar=\hstar(s)<+\infty$.

A heuristic $h$ is \emph{perfect} if $h(v)=\hstar(v)$ for every
state $v$. More generally, $h$ is \emph{perfect on a set
$W\subseteq V$} if $h(v)=\hstar(v)$ for every $v\in W$.

Unless stated otherwise, $h$ is consistent,
\[
  h(u)\leq c(u,v)+h(v)\qquad ((u,v)\in A),
\]
and $h(G)=0$ for every goal $G$. These conditions imply admissibility,
$h(v)\leq\hstar(v)$. Positivity at non-goals is assumed only where stated.
Successors are generated in a fixed order, which is part of the
instance and is identical in every comparison on that instance.

\astar{} maintains \Open{} and \Closed{}, with at most one current
\Open{} entry per state. An entry stores its best known $g$-value,
parent, first-insertion timestamp, and goal status. Expanding a non-goal
state removes it from \Open{}, adds it to \Closed{}, and generates all
its successors. A newly encountered state is goal-tested and inserted.
A strictly smaller tentative $g$ updates an existing entry and its
parent, but \emph{preserves its first-insertion timestamp}; equal- and
higher-cost duplicates are discarded. Consistency ensures that an
expanded state has $g=\gstar$ and is never reopened. Stale priority-queue
records, if retained by an implementation, are ignored and do not count
as expansions.

The goal test is evaluated at generation, including for the initial
state. Search terminates only when a goal is selected from \Open{}.
At a selection point, define
\[
  \OpenMin=\operatorname*{arg\,min}_{v\in\Open}\{g(v)+h(v)\}.
\]
If \OpenMin{} contains a goal, a goal is selected immediately; otherwise
a tie-breaker selects a non-goal in \OpenMin{} for expansion. We call the states of \OpenMin{} \emph{eligible} at that selection point. Selecting 
a goal is not counted as an expansion. A generated goal with larger $f$
remains in \Open{} and does not cause termination.

This goal-preference convention is standard in analyses of \astar{}
tie-breaking \citep{dechter1985optimality,correa2018optimal}.
Adding it to a minimum-$f$ search cannot increase its expansion count:
under otherwise identical choices, the two runs share a prefix, after
which the goal-preferring run may terminate while the other continues.

Our comparisons hold the graph, heuristic, successor order, duplicate
handling, and goal-preference rule fixed. The importance of specifying
all levels of tie-breaking is also emphasized by
\citet{barley2025methodology}, who show that incomplete descriptions can
lead to irreproducible and misleading comparisons in optimal
bidirectional heuristic search. Here, these conventions isolate the
expansion effects of the tie-breaking rules being compared.

\subsection{Tie-breaking and comparisons}

Tie-breaking criteria are applied lexicographically after $f$ and goal
preference. We omit these two common criteria when naming a strategy.
Thus \LowH{\FIFO} denotes low-$h$ followed by FIFO, and
\HighH{\LIFO} denotes high-$h$ followed by LIFO. The full ordering for
\LowH{\FIFO} is $[f,\text{goal preference},h,\FIFO]$.
FIFO and LIFO select the earliest and latest first-insertion timestamps,
respectively. RANDOM samples uniformly from the states left tied by all
preceding criteria, independently at each selection. The terms
\emph{low-$h$} and \emph{high-$h$} name criteria, whereas
\LowH{\tau} and \HighH{\tau} name strategies with a specified final
tie-breaker $\tau$.

At equal $f$, minimizing $h$ is equivalent to maximizing $g$, and
maximizing $h$ is equivalent to minimizing $g$. The distinction between
smaller-$g$ and larger-$g$ tie-breaking also appears in incremental
search: \citet{likhachev2005generalized} generalize Lifelong Planning
\astar{} to support larger-$g$ tie-breaking, among other extensions.
Our comparisons concern ordinary, non-incremental \astar{}.

Let $E_\tau(I;\rho)$ be the set of non-goal states expanded by strategy
$\tau$, and let $X_\tau(I;\rho)$ be its expansion count, for random
outcome $\rho$. Since there are no re-expansions,
$X_\tau=|E_\tau|$. We omit $\rho$ for deterministic runs.
For deterministic strategies, \emph{set dominance} on an instance class
means $E_{\tau_1}(I)\subseteq E_{\tau_2}(I)$ on every instance, with a
strict inclusion on at least one. \emph{Expansion-count dominance}
replaces inclusion by $X_{\tau_1}(I)\leq X_{\tau_2}(I)$; for randomized
strategies, we compare expectations. We establish \emph{pairwise
incomparability} by giving instances with strict count inequalities in
both directions. This also rules out set dominance in either direction
for deterministic strategies; the converse implication need not hold.
An additive gap is unbounded when it exceeds every prescribed integer
on some instance. Expectations are over a strategy's random choices on
a fixed instance, not over a distribution of instances.

\subsection{Compulsory expansions and the final plateau}
\label{sec:background}

Define $f^*(v)=\gstar(v)+h(v)$, where the superscript refers to the
optimal cost from the initial state, not to use of a perfect heuristic.
For reachable non-goal states, let
\[
\begin{split}
  S_< &=\{v\notin\mathcal G:f^*(v)<\Cstar\},\\
  S_= &=\{v\notin\mathcal G:f^*(v)=\Cstar\},\qquad
  S_{\leq}=S_<\cup S_=.
\end{split}
\]
The states in $S_<$ are \emph{surely expanded}. We call $S_=$ the
non-goal final $f$-layer and use \emph{final plateau} for search at
$f=\Cstar$, including any eligible optimal goals. We restate the
classical expansion bounds in this notation
\citep{dechter1985optimality}.

\begin{samepage}
\begin{lemma}[\astar{} expansion bounds; after \citet{dechter1985optimality}]
\label{lem:f-bounds}
Every run of the modeled \astar{} algorithm satisfies
\[
  S_<\subseteq E_\tau(I;\rho)\subseteq S_{\leq}.
\]
All states in $S_<$ are expanded before any state in $S_=$.
\end{lemma}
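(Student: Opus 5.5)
The plan is to follow the classical argument of \citet{dechter1985optimality}, adapted to this model: goal test at generation, goal preference at selection, and timestamps preserved under $g$-updates. Everything rests on one invariant, which I will call the \emph{optimal-path frontier invariant}. Fix an optimal path $s=v_0,v_1,\dots,v_n=G$ with $G\in\mathcal G$. At every selection point before termination, some $v_i$ is in \Open{} with $g(v_i)=\gstar(v_i)$, and $v_0,\dots,v_{i-1}$ are all in \Closed{}.

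I will prove the invariant by induction on the number of expansions. Initially $v_0=s$ is in \Open{} with $g=0$. Suppose $v_i$ is the shallowest path state not in \Closed{}. Then $v_{i-1}$ was expanded, and by consistency it had $g=\gstar(v_{i-1})$ at that moment. Its expansion therefore generated $v_i$ with tentative cost $\gstar(v_{i-1})+c(v_{i-1},v_i)=\gstar(v_i)$. After that, $v_i$'s entry holds $\gstar(v_i)$, because updates only lower $g$ and $\gstar$ is a lower bound. Such a $v_i$ is never reopened, and it leaves \Open{} only by being expanded, so the invariant persists. Consistency and $h(G)=0$ give admissibility, so $f(v_i)=\gstar(v_i)+h(v_i)\le \gstar(v_i)+\hstar(v_i)=\Cstar$. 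Hence $\min_{\Open} f\le\Cstar$ at every selection point.

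The two inclusions then follow.

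\emph{Upper bound.} Any expanded $v$ was selected with $f(v)=\gstar(v)+h(v)=f^*(v)$, using that $g=\gstar$ at expansion. By the invariant, $f(v)\le\Cstar$, so $v\in S_{\le}$.

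\emph{Lower bound.} Take $v\in S_<$. I first need a path from $s$ to $v$ whose states all have $f^*<\Cstar$. Consistency makes $f^*$ nondecreasing along an optimal path to $v$: for consecutive states $u,w$ on it, $f^*(u)=\gstar(u)+h(u)\le \gstar(u)+c(u,w)+h(w)=f^*(w)$. So every state on an optimal $s$-to-$v$ path has $f^*\le f^*(v)<\Cstar$. None of these states is a goal, since a goal $u$ would have $f^*(u)=\gstar(u)\ge\Cstar$.

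Running the same frontier invariant along this path shows that, until $v$ is expanded, some path state sits in \Open{} with $f<\Cstar$. Termination requires selecting a goal with $f=\min_{\Open}f$. Every goal $G'$ in \Open{} has $f(G')=g(G')\ge\gstar(G')\ge\Cstar$, so no goal can be selected while that path state is present. Since the graph is finite and nothing is reopened, the search expands $v$ before it terminates.

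For the ordering claim, suppose some $w\in S_=$ is expanded while some $v\in S_<$ is still unexpanded. The $S_<$ path argument provides a state in \Open{} with $f<\Cstar=f(w)$, which contradicts the requirement that $w\in\OpenMin$.

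The main obstacle is the bookkeeping in the invariant under this particular implementation model. I need to check that timestamp preservation and the discarding of equal-cost duplicates never leave $v_i$ with a stale $g>\gstar(v_i)$. Equal-cost duplicates are harmless because the entry already holds $\gstar(v_i)$. I also need to confirm that goals generated early but not selected do not break the argument; they do not, because termination happens only at selection.
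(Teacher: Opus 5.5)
Your proposal is correct and takes essentially the same route as the paper. A frontier state of an optimal solution path stays in \Open{} with $f\leq\Cstar$, which gives the upper bound; monotonicity of $f^*$ along a shortest path to $v\in S_<$, together with the same frontier argument, forces $v$ to be expanded before any goal is selected and before any expansion at $f=\Cstar$. Your version spells out the inductive invariant and the termination argument that the paper's terse proof leaves implicit.
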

\end{samepage}

\begin{proof}
Consistency gives the optimal-$g$ property at expansion and nondecreasing
selected $f$-values. Before termination, a frontier state on an optimal
solution path is in \Open{} with $f\leq\Cstar$, so no state with
$f^*>\Cstar$ is expanded. For $v\in S_<$, $f^*$ is nondecreasing along
a shortest path to $v$. Each state on that path has $f^*<\Cstar$ and
must be generated with its optimal $g$ and expanded before a goal, whose
$f$-value is at least $\Cstar$. The same reasoning forces these states
before any expansion at $f=\Cstar$.
\end{proof}

The distinction between compulsory and final-layer expansions matters
when interpreting comparisons of search effort: comparing $S_<$ alone
does not determine total expansion counts. In particular,
\citet{holte2010misconceptions} shows that a more accurate heuristic need
not reduce the total number of \astar{} expansions. Our pairwise
comparisons hold the heuristic fixed and vary only the specified
tie-breaking criteria.

An instance is \emph{nonpathological} with respect to $h$ if it has an
optimal solution path with $h(v)<\hstar(v)$ at every non-goal state on
the path \citep{dechter1985optimality}. Otherwise it is
\emph{pathological}. Let $\Pi^*$ be the simple optimal paths whose only
goal is their last state, and define
\[
  \ell^*(I)=\min_{\pi\in\Pi^*}|(\pi\setminus\mathcal G)\cap S_=|.
  \]

The following proposition states the classical
minimum-expansion characterization in the model used in this paper.
All runs considered here return an optimal-cost solution;
expansion optimality instead concerns search effort, measured by the
number of non-goal states expanded. We call a run
\emph{expansion-minimal} on $I$ if no other run of the modeled \astar{}
algorithm expands fewer non-goal states on the same instance.

\begin{samepage}
\begin{proposition}[Minimum-expansion characterization;  after \citet{dechter1985optimality}]
\label{prop:classical-optimum}
Every run satisfies
\[
  X_\tau(I;\rho)\geq |S_<|+\ell^*(I),
\]
and an instance-dependent deterministic tie-breaker attains equality.
Hence
\[
  |S_<|+\ell^*(I)
\]
is the minimum expansion count achievable by the modeled \astar{}
algorithm on $I$.

Moreover, the following are equivalent: the instance is nonpathological;
$\ell^*(I)=0$; some run expands exactly $S_<$; every run expands exactly
$S_<$.
\end{proposition}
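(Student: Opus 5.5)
The proof splits into three parts: a lower bound, a matching construction, and the chain of equivalences. Throughout, Lemma~\ref{lem:f-bounds} supplies $S_<\subseteq E_\tau\subseteq S_{\leq}$ for every run, and the goal-preference convention is used freely.

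\textbf{Lower bound.} Fix any run. By Lemma~\ref{lem:f-bounds} it expands all of $S_<$, so it remains to show $|E_\tau\cap S_=|\geq \ell^*(I)$. Consider the goal $G$ selected at termination and follow parent pointers from $G$ back to $s$. Every state on this chain other than $G$ was expanded. Because $G$ is selected with $f=g(G)=\Cstar$ and expanded states carry optimal $g$, the chain is a path of cost $\Cstar$. Selection happens only when $G$ is in \OpenMin{}, so this path is an optimal path from $s$ to $G$.

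The chain is simple, since parent pointers of expanded states form a tree. Its only goal is its last state: the goal test at generation means a goal is never expanded, so no goal can be an interior parent. Hence the chain lies in $\Pi^*$. Its non-goal states are all expanded and all lie in $S_{\leq}$, so those in $S_=$ number at least $\ell^*(I)$. This gives $X_\tau\geq|S_<|+\ell^*(I)$.

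\textbf{Attaining equality.} Choose $\pi\in\Pi^*$ achieving $\ell^*$. Define a deterministic instance-dependent tie-breaker: among the non-goal states of \OpenMin{} at $f=\Cstar$, prefer a state of $\pi$, and otherwise choose arbitrarily. By Lemma~\ref{lem:f-bounds}, every state of $S_<$ is expanded first.

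The plateau invariant I will argue is the following. Until $\pi$'s goal appears, the deepest generated state $u$ of $\pi$ that lies in \Open{} with $g=\gstar$ has $f=\Cstar$, using $f^*\leq\Cstar$ along an optimal path. Therefore $u$ is eligible.
\begin{itemize}[nosep]
\item If $u$ is in $S_<$, it has already been expanded.
\item If $u$ is in $S_=$, the tie-breaker expands it.
\end{itemize}
So at $f=\Cstar$ the run expands only $\pi$'s plateau states, unless some goal becomes eligible earlier, which only stops the run sooner. Its plateau expansions are therefore at most $\ell^*$, and equality follows from the lower bound.

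The main obstacle is a subtlety in this invariant. States of $\pi$ in $S_<$ must already have been expanded with optimal $g$, so that their successors on $\pi$ are present with optimal $g$. I will handle this with the standard Hart--Nilsson--Raphael frontier argument on $\pi$: each successor on $\pi$ is generated with optimal $g$ when its predecessor is expanded, and $\pi$ is fixed in advance.

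\textbf{Equivalences.}
\begin{itemize}[nosep]
\item Nonpathological $\Rightarrow$ $\ell^*=0$. Along a witnessing path, each non-goal $v$ has $f^*(v)=\gstar(v)+h(v)<\gstar(v)+\hstar(v)=\Cstar$, so none of these states lies in $S_=$. Such a path can be taken simple with a single goal by truncating at its first goal, which is still optimal since costs are nonnegative and $\hstar(\text{goal})=0$.
\item $\ell^*=0$ $\Rightarrow$ nonpathological. Some path in $\Pi^*$ has all non-goals in $S_<$. For these, $h<\Cstar-\gstar=\hstar$ holds on the optimal path.
\item $\ell^*=0$ $\Rightarrow$ every run expands exactly $S_<$. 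The lower bound and $E_\tau\subseteq S_{\leq}$ alone do not suffice here, so I will argue directly. Take the path $\pi$ from the previous item. Once $S_<$ is exhausted, every non-goal of $\pi$ has been expanded, so the goal of $\pi$ has been generated with $g=\Cstar$. It is eligible at the first plateau selection, so goal preference terminates the run before any $S_=$ expansion.
\item Every run expands exactly $S_<$ $\Rightarrow$ some run does. This is trivial.
\item Some run expands exactly $S_<$ $\Rightarrow$ $\ell^*=0$. This follows from the lower bound.
\end{itemize}
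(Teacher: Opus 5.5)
Your proposal is correct and follows the paper's proof. The lower bound comes from the parent-pointer chain of the selected goal together with Lemma~\ref{lem:f-bounds}; equality comes from a tie-breaker favoring a path attaining $\ell^*$; and the equivalences rest on $f^*(v)<\Cstar\iff h(v)<\hstar(v)$ along optimal paths, plus goal preference once $S_<$ is exhausted. The only differences are cosmetic. The paper uses consistency to show that the $S_=$ states of the chosen path form a final segment, which it expands in path order. You instead let the tie-breaker pick any eligible state of $\pi$ and bound the count with the frontier argument; note that the state whose eligibility that argument establishes is the first unexpanded state of $\pi$, not the ``deepest'' one.
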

\end{samepage}

\begin{proof}
The parent-pointer chain of the selected optimal goal is an optimal
path, and each of its non-goal states was expanded.
Lemma~\ref{lem:f-bounds} also forces all of $S_<$, giving the lower bound.
Choose a path attaining $\ell^*$. Consistency makes its states in $S_=$
a final segment. After $S_<$ has been expanded, an instance-dependent
tie-breaker can select the states of this segment in path order, then
select the generated goal. It expands exactly $|S_<|+\ell^*$ states.

On an optimal path, $f^*(v)<\Cstar$ is equivalent to
$h(v)<\hstar(v)$, so nonpathology is equivalent to $\ell^*=0$.
In that case, an optimal goal is generated from $S_<$ and is selected
before any non-goal in $S_=$. Every run therefore expands exactly $S_<$.
Conversely, the parent-pointer path of any run expanding only $S_<$ has
$\ell^*=0$.
\end{proof}

The minimum is over all tie-breaking rules allowed by our \astar{}
model. Such a rule may use instance-specific information, including
information about states that have not yet been generated, and may
differ from the standard strategies studied next. The existence of an
expansion-minimal rule therefore does not establish a dominance relation
between any two fixed strategies.

\section{Pairwise Incomparability}\label{sec:pairwise}

This section establishes unbounded worst-case separation in the number of expansions among the standard tie-breaking rules.

\subsection{A final-plateau construction}

We start with a simple construction with unbounded branching factor. 
The following construction lets us choose insertion order and heuristic values independently, while keeping all competing states tied on the primary $f$-value. This lets us control FIFO/LIFO preferences and low-$h$/high-$h$ preferences separately, while RANDOM samples uniformly from the states left tied.

\begin{samepage}
\begin{lemma}[Star construction]\label{lem:star}
Let  $M\geq1$ be an integer, $C>0$, and values $0<h(u)<C$ for
$U=\{p,b_1,\ldots,b_M\}$.  Construct a tree with initial state $s$,
unique goal $G$, edges $s\to u$ for all $u\in U$, and one further edge
$p\to G$.  Set
\[
\begin{gathered}
 h(s)=C,\qquad h(G)=0,\\
 c(s,u)=C-h(u),\qquad c(p,G)=h(p).
\end{gathered}
\]
Generate $U$ in any prescribed order.  All edge costs are positive; $h$
is admissible, consistent, zero at the goal, and positive at non-goals.
Every state satisfies $f^*=\Cstar=C$.
If a strategy expands $k$ of the $b_i$ before $p$, then $X_\tau=2+k$.
\end{lemma}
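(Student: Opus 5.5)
The argument is a direct verification, in three steps: the static properties (costs, consistency, admissibility), then the $f^*$ values, and finally a trace of the run on the plateau.

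\emph{Static properties.} Since $0<h(u)<C$, we have $c(s,u)=C-h(u)>0$ and $c(p,G)=h(p)>0$. Consistency is checked edge by edge:
\begin{itemize}[nosep]
\item on $s\to u$, $h(s)=C=c(s,u)+h(u)$ holds with equality;
\item on $p\to G$, $h(p)=c(p,G)+0$ holds with equality.
\end{itemize}
There are no other edges. We have $h(G)=0$ and $h>0$ at every non-goal, and admissibility follows from consistency as noted in Section~\ref{sec:model}. As a direct check, $\hstar(b_i)=+\infty$ because $b_i$ is a dead end, $\hstar(p)=h(p)$, and $\hstar(s)=C$.

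\emph{Values of $f^*$.} The graph is a tree, so $\gstar$ is simply the path cost. This gives $\gstar(u)=C-h(u)$, hence $f^*(u)=C$. For the goal, $\gstar(G)=C-h(p)+h(p)=C$, and $f^*(s)=0+C=C$. The only solution is $s\to p\to G$ of cost $C$, so $\Cstar=C$.

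\emph{Expansion count.} The run unfolds as follows.
\begin{itemize}[nosep]
\item Initially $\Open=\{s\}$ and $s$ is not a goal, so $s$ is expanded.
\item This generates all of $U$ with exact $g$-values and $f=C$. No goal has been generated yet, so every selection is among non-goals in $U$ until $p$ is expanded.
\item Each $b_i$ has no successors, so expanding it adds nothing to \Open{} and every remaining state stays tied at $f=C$.
\item When $p$ is eventually expanded, $G$ is generated with $f=C$. It lies in \OpenMin{} together with the remaining $b_i$, so goal preference selects $G$ immediately.
\item Selecting $G$ is not counted as an expansion.
\end{itemize}
The expanded set is therefore $\{s,p\}$ together with exactly the $k$ dead ends chosen before $p$, so $X_\tau=2+k$.

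The main point needing care is this last step. We must confirm that no goal becomes eligible before $p$ is expanded, since $G$'s only parent is $p$. We must also confirm that after $G$ is generated, goal preference prevents any further $b_i$ from being expanded. Both facts follow from the tree structure and the goal-preference convention.
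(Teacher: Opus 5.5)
Your proposal is correct and follows essentially the same route as the paper. Both check that every consistency inequality holds with equality, identify $s,p,G$ as the unique solution path of cost $C$ with each $b_i$ a dead end, and use the facts that $G$ is generated only when $p$ is expanded and that goal preference then ends the search; you simply spell out the cost positivity and individual $f^*$ computations that the paper leaves implicit.
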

\end{samepage}

\begin{proof}
Each consistency inequality holds with equality.  The unique solution
path $s,p,G$ has cost $C$, and $h$ is perfect on it.  Each $b_i$ is a
dead end with $\hstar(b_i)=+\infty$.  After expanding $s$, all members of
$U$ are known non-goals in \OpenMin{}, with $f=C$.
Only expanding $p$ generates a goal.  Goal preference then terminates
search immediately, after $s$, the $k$ dead ends, and $p$.
\end{proof}

\begin{figure}[htbp]
\centering
\begin{tikzpicture}[x=1cm,y=1cm]
  \node[start] (s) at (0,0) {$s$};
  \node[state] (p) at (2.5,1.35) {$p$};
  \node[dead] (b1) at (2.5,0.35) {$b_1$};
  \node[ell] (dots) at (2.5,-0.45) {$\vdots$};
  \node[dead] (bm) at (2.5,-1.25) {$b_M$};
  \node[goal] (g) at (5.0,1.35) {$G$};
  \draw[edge] (s) -- node[edgelabel,above left] {$C-h(p)$} (p);
  \draw[edge] (s) -- node[edgelabel,above] {$C-h(b_1)$} (b1);
  \draw[edge] (s) -- (bm);
  \draw[edge] (p) -- node[edgelabel,above] {$h(p)$} (g);
  \node[note,anchor=west] at (5.55,0.25) {After expanding $s$,\\all states in $U$ are\\known non-goals at $f=C$.};
  \node[note,anchor=west] at (5.55,-0.95) {$G$ is generated only\\when $p$ is expanded.};
  \begin{scope}[on background layer]
    \node[plateau,fit=(s)(p)(b1)(dots)(bm)(g)] {};
  \end{scope}
\end{tikzpicture}
\caption{The star construction.  Every state has $f^*=C^*$.
After expanding $s$, the $M+1$ states in $U$ are tied non-goals in \Open{}.
Among the states in $U$, only $p$ has a successor, namely the goal $G$.  The expansion count is $2$ plus
the number of dead ends selected before $p$.}
\label{fig:star}
\end{figure}

Consider the nine strategies
\[
\begin{split}
\mathcal P=\{&\FIFO,\LIFO,\RAND,
 \LowH{\FIFO},\LowH{\LIFO},\LowH{\RAND},\\
 &\HighH{\FIFO},\HighH{\LIFO},\HighH{\RAND}\}.
\end{split}
\]

\begin{samepage}
\begin{theorem}[Pairwise incomparability]\label{thm:pairwise}
On the positive-cost trees of Lemma~\ref{lem:star}, for every integer $M\geq1$:
\begin{enumerate}[label=(\roman*),leftmargin=*,itemsep=0.25em]
\item Every pair of distinct deterministic strategies in $\mathcal P$
  has an instance with expansion counts $2$ and $M+2$, and another with
  these counts reversed.
\item Every pair of distinct strategies in $\mathcal P$ involving at
  least one randomized strategy has an instance favoring each strategy
  in expected expansions, with an expected gap of at least $M/2$ in
  each direction.
\end{enumerate}
All comparisons are decided before an optimal goal is generated.
\end{theorem}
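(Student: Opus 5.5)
All comparisons reduce, via Lemma~\ref{lem:star}, to controlling how many dead ends $b_i$ each strategy selects before $p$. After $s$ is expanded, the $M+1$ states of $U$ are tied at $f=C$ and no goal is in \Open{}. Any strategy that selects $p$ first therefore gives $X_\tau=2$, and one that selects all $b_i$ first gives $X_\tau=M+2$. Once $p$ is expanded, $G$ is generated and selected by goal preference. So the comparison is fixed before $G$ exists, which gives the final sentence of the theorem.

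The instance parameters I will vary are the position of $p$ in the generation order (first or last) and the $h$-value of $p$ relative to the $b_i$. I will take all $b_i$ to share one value $h_b$, with $h(p)$ either strictly lower or strictly higher than $h_b$, or equal to it. There are three relevant $h$-profiles: $p$ lowest, $p$ highest, and all equal. Combined with the two position choices, this gives six instances.

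For each strategy I will then compute $X_\tau$ on each instance; $\RAND$-based ones get an expectation.
\begin{itemize}[leftmargin=*,itemsep=0.2em]
\item \FIFO{} and \LIFO{} ignore $h$, so $p$ first gives $2$ and $M+2$ respectively, and $p$ last reverses this.
\item \LowH{\cdot} picks $p$ immediately when $h(p)<h_b$, and picks all $b_i$ first when $h(p)>h_b$. On equal $h$ it reduces to its final tie-breaker. \HighH{\cdot} behaves symmetrically.
\item When $p$ is uniform among $M+1$ tied states, uniform \RAND{} gives $\mathbb E[k]=M/2$, hence $\mathbb E[X]=2+M/2$. When the $h$-criterion singles out $p$ or excludes it, $\RAND$ composites give exactly $2$ or exactly $M+2$.
\end{itemize}

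For (i), I will take each pair of distinct deterministic strategies and exhibit an instance where one gets $2$ and the other $M+2$, then a second instance with the roles swapped. The cases are:
\begin{itemize}[leftmargin=*,itemsep=0.2em]
\item \FIFO{} vs \LIFO, and \LowH{\FIFO} vs \LowH{\LIFO}: equal $h$, with $p$ first or last.
\item \LowH{\cdot} vs \HighH{\cdot} with any final tie-breakers: $p$ lowest or $p$ highest.
\item \FIFO{} vs \LowH{\LIFO}: $p$ first with $h(p)>h_b$ (so \LowH{\LIFO} expands all $b_i$), and $p$ last with $h(p)<h_b$.
\end{itemize}
A general rule covers the remaining mixed pairs. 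Choose the $h$-profile that makes the $h$-using strategy pick $p$ immediately or defer it, and choose the position that makes the other strategy do the opposite. Since position and $h$ are independent parameters, both directions are always attainable.

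For (ii), I will pair a randomized strategy with any other strategy and find instances where one achieves $2$ while the other has expectation at least $2+M/2$, and vice versa.
\begin{itemize}[leftmargin=*,itemsep=0.2em]
\item Plain \RAND{} vs any $h$-strategy: with equal $h$ the deterministic tail decides, so choose the position giving the opposite extreme. For example, vs \LowH{\FIFO}, $p$ last yields $M+2$ against $2+M/2$, and $p$ first yields $2$. The gap is $M/2$ both ways.
\item \RAND{} vs \FIFO{}: equal $h$, with $p$ last or first.
\item $\RAND$ composites against each other or against deterministic ones: the low-$h$ versus high-$h$ profiles give exact $2$ against $M+2$.
\item \LowH{\RAND} vs \RAND: use the $p$-lowest profile for $2$ against $2+M/2$, and the $p$-highest profile for $M+2$ against $2+M/2$.
\end{itemize}

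The main obstacle is bookkeeping: making sure every one of the $\binom92=36$ pairs is covered in both directions. The risky pairs are \LowH{\RAND} vs \RAND{} and \HighH{\RAND} vs \RAND, where the gap is only $M/2$, exactly the bound claimed. I will organize the argument as a table of the six instances against the nine strategies, and read off each pair from two columns.
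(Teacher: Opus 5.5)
Your proposal is correct and is essentially the paper's proof. You reduce via Lemma~\ref{lem:star} to the number of dead ends selected before $p$, choose the insertion position of $p$ and its $h$-value independently, and use the uniform rank of $p$ under \RAND{} to get expectation $2+M/2$. For \RAND{} against a deterministic-tail $h$-strategy you use the equal-$h$ profile, whereas the paper makes $p$ uniquely preferred or disfavored; this difference is immaterial.

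One bookkeeping slip needs fixing. You claim the low-$h$/high-$h$ profiles give exactly $2$ versus $M+2$ for \RAND{} composites against deterministic strategies. This fails for same-criterion pairs such as \LowH{\RAND} versus \LowH{\FIFO}, which get identical counts under both profiles. Those pairs must be read from your equal-$h$ instances with $p$ first or last, where the gap is exactly $M/2$. Your six-instance table already contains those instances, so the argument still goes through.
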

\end{samepage}

\begin{proof}
By Lemma~\ref{lem:star}, it suffices to control the number of dead ends
expanded before $p$.  Insertion order and the values $h(u)$ can be
chosen independently.  Group strategies by their heuristic criterion:
none, low-$h$, or high-$h$.

If both strategies have the same heuristic criterion, they differ in
the final FIFO, LIFO, or RANDOM rule.  Give all members of $U$ equal $h$.
Making $p$ first or last in insertion order reverses the deterministic
FIFO/LIFO counts.  Under RANDOM, the rank of $p$ among $M+1$ states is
uniform, so the expected number of preceding dead ends is $M/2$.
This gives both expected inequalities against either deterministic rule.

If one strategy is low-$h$ and the other high-$h$, make $p$ the unique
minimum-$h$ state and all dead ends higher, or make $p$ the unique
maximum-$h$ state and all dead ends lower.  These choices give counts
$2$ and $M+2$ in opposite directions, regardless of the final rule.

It remains to compare a strategy with no heuristic criterion against
one with a low-$h$ or high-$h$ criterion.  If the former is FIFO or LIFO,
choose $p$'s insertion position to make it last for that strategy and
its $h$-value to make it uniquely preferred by the latter.  Reversing
both choices reverses the counts.  If the former is RANDOM, its expected
count is $2+M/2$ independently of the $h$-values; making $p$ uniquely
preferred or disfavored by the heuristic criterion gives the two
expected gaps of $M/2$.

In each deterministic comparison, the expanded sets are $\{s,p\}$
and $\{s,p,b_1,\ldots,b_M\}$, in opposite orders on the two instances.
The randomized comparisons follow from the uniform rank of $p$ in
the cases where randomization affects its position.
\end{proof}

When all members of $U$ have equal $h$, RANDOM assigns a uniform rank
to $p$ among the $M+1$ tied states.  Hence
\[
  X_{\RAND}-2\sim\operatorname{Unif}\{0,\ldots,M\},
  \qquad
  \mathbb E[X_{\RAND}]=2+\frac{M}{2}.
\]
Moving $p$ from first to last in insertion order reverses the FIFO and
LIFO counts but does not change the RANDOM distribution.

\paragraph{Unbounded expansion gaps with bounded branching.} 
While this star construction uses an unbounded branching factor only to generate all
$M+1$ tied states in a single expansion,  this is not essential to
the separation.  The same tied frontier can be generated with maximum
out-degree two by replacing $s$ with a forced chain: each chain state
generates one member of $U$ and the next chain state, with the chain
states assigned $f<C^*$ and the members of $U$ assigned $f=C^*$.
\astar{} therefore expands the entire chain before selecting any member of
$U$.  At that point, the members of $U$ have the prescribed insertion
order and heuristic values, and the remainder of the search is exactly
the star construction.  The forced prefix adds the same number of
expansions to every strategy, so all pairwise additive gaps in
Theorem~\ref{thm:pairwise} remain unchanged.

\subsection{Unbounded gaps on unit-cost grids}
\label{sec:grid-family}
The star construction above gives a simple proof of pairwise incomparability
on an abstract structure, and shows that in general, none of the standard tie-breaking rules dominate the other. 
We next show that a large
tie-breaking effect also occurs under a familiar setting. 
Figure~\ref{fig:grid-pair} illustrates large expansion gaps between
\LowH{\FIFO} and \LowH{\LIFO} on four-neighbor grids with unit costs
and Manhattan distance. Successors are generated in the fixed order
Up, Right, Down, Left in both panels.

In panel~(a), the start has two branches: an L-shaped corridor leading
to the goal, and a cul-de-sac connected to the corridor only through
the start. For an integer $m\geq1$, scale the illustrated layout so
that each leg of the solution corridor has $m+2$ edges and the
cul-de-sac consists of an $(m+1)\times(m+1)$ square plus its entrance
cell. The solution path contains $C_m=2m+4$ non-goal states, including
the start, and the cul-de-sac contains $D_m=(m+1)^2+1$ states.
Every traversable cell is reachable from the start by a monotone path
toward the goal, so $\gstar(v)+h(v)=\Cstar=2m+4$ throughout the grid.

Expanding the start generates the corridor entrance first and the
cul-de-sac entrance second. These states tie in both $f$ and $h$,
so \LowH{\FIFO} enters the corridor, whereas \LowH{\LIFO} enters the
cul-de-sac. Once either entrance is expanded, that branch continues
to offer eligible states with smaller $h$ than the postponed entrance
to the other branch. Consequently, the corridor-first run follows the
solution path to the goal, while the cul-de-sac-first run exhausts
the cul-de-sac before entering the solution corridor. The expansion
counts are therefore $C_m$ and $C_m+D_m$, respectively.

Panel~(b) transposes the grid while retaining the same successor order.
This exchanges the roles of the two entrances, so \LowH{\LIFO} now
expands $C_m$ non-goals and \LowH{\FIFO} expands $C_m+D_m$.
Thus either strategy can incur an additive gap
$D_m=\Theta(m^2)$ and an expansion-count ratio
$(C_m+D_m)/C_m=\Theta(m)$.

\begin{figure}[tbp]
\centering
\begin{tikzpicture}[x=0.57cm,y=0.57cm,font=\scriptsize]
  \newcommand{\mapA}[1]{%
    \begin{scope}[xshift=#1]
      \foreach \y in {0,...,5}
        {\draw[gridfree] (0,\y) rectangle ++(1,1);}
      \foreach \x in {1,...,5}
        {\draw[gridfree] (\x,5) rectangle ++(1,1);}

      \draw[gridbad] (1,0) rectangle ++(1,1);
      \foreach \x in {2,...,5} {
        \foreach \y in {0,...,3}
          {\draw[gridbad] (\x,\y) rectangle ++(1,1);}
      }

      \foreach \y in {1,...,4}
        {\draw[gridblock] (1,\y) rectangle ++(1,1);}
      \foreach \x in {2,...,5}
        {\draw[gridblock] (\x,4) rectangle ++(1,1);}

      \node at (0.5,0.5) {$s$};
      \node at (0.5,1.5) {$a$};
      \node at (1.5,0.5) {$b$};
      \node at (5.5,5.5) {$G$};
      \draw[double,double distance=0.7pt] (5,5) rectangle ++(1,1);

      \node[gridcallout,above] at (3,6.35) {(a) FIFO favored};
      \node[gridcallout,left] at (-0.05,3.25)
        {solution path\\$C_m$ expansions};
      \node[gridcallout] at (3.75,1.8)
        {cul-de-sac\\$D_m$ states};
    \end{scope}%
  }

  \newcommand{\mapB}[1]{%
    \begin{scope}[xshift=#1]
      \foreach \x in {0,...,5}
        {\draw[gridfree] (\x,0) rectangle ++(1,1);}
      \foreach \y in {1,...,5}
        {\draw[gridfree] (5,\y) rectangle ++(1,1);}

      \draw[gridbad] (0,1) rectangle ++(1,1);
      \foreach \x in {0,...,3} {
        \foreach \y in {2,...,5}
          {\draw[gridbad] (\x,\y) rectangle ++(1,1);}
      }

      \foreach \x in {1,...,4}
        {\draw[gridblock] (\x,1) rectangle ++(1,1);}
      \foreach \y in {2,...,5}
        {\draw[gridblock] (4,\y) rectangle ++(1,1);}

      \node at (0.5,0.5) {$s$};
      \node at (0.5,1.5) {$a$};
      \node at (1.5,0.5) {$b$};
      \node at (5.5,5.5) {$G$};
      \draw[double,double distance=0.7pt] (5,5) rectangle ++(1,1);

      \node[gridcallout,above] at (3,6.35) {(b) LIFO favored};
      \node[gridcallout,below] at (3.1,-0.3)
        {solution path: $C_m$ expansions};
      \node[gridcallout] at (1.9,3.65)
        {cul-de-sac\\$D_m$ states};
    \end{scope}%
  }

  \mapA{0cm}
  \mapB{7.8cm}
\end{tikzpicture}

\caption{ The grid family, shown for 
$m=3$.  White cells form the unique optimal solution path, lightly shaded cells
form the cul-de-sac, and dark cells are blocked.  The successor order is
Up, Right, Down, Left in both panels.  In panel~(a),
\LowH{\FIFO} enters the solution corridor first and expands $10$ states,
whereas \LowH{\LIFO} enters the cul-de-sac first and expands $27$.
Transposition reverses their roles in panel~(b).  The selected goal is
not counted as an expansion.}
\label{fig:grid-pair}
\end{figure}

\section{Minimum- and Maximum-Expansion Tie-Breaking with Locally Perfect Heuristics}\label{sec:near-goal}

The preceding section shows that in general, standard tie-breaking strategies are incomparable, and that large tie-breaking effects persist even
on restricted unit-cost instances.
However, with sufficiently strong constraints, it is possible to find interesting cases where some tie-breaking strategies dominate others.
We now consider one such constraint: 
assume that the heuristic is perfect on specified
low-$h$ layers of the final plateau.
This condition rules out the
mechanism used by the star construction, where dead ends can receive
small heuristic values, and yields strong guarantees for broad classes
of tie-breaking strategies.

The guarantees depend only on whether $h$ is minimized or maximized
within the final plateau. They hold independently of how remaining ties
are resolved. Thus, for an arbitrary final tie-breaker $\tau$, every
low-$h$ strategy \LowH{\tau} minimizes the number of remaining
expansions once the search reaches the relevant perfect region.
Likewise, for an arbitrary final tie-breaker $\sigma$, every high-$h$
strategy \HighH{\sigma} maximizes the total number of expansions under
the weaker assumption that $h$ is perfect on $B_1$.  These statements also hold for every
outcome when the final tie-breaker is randomized. Hence the results
characterize the low-$h$ and high-$h$ classes as a whole, rather than
particular choices such as FIFO, LIFO, or RANDOM within those classes.

Throughout this section, all edges have cost one, and $h$ is consistent,
zero at goals, and strictly positive at non-goals. These are additional
assumptions and are not properties of arbitrary \astar{} instances.
Since $\gstar$ and $\Cstar$ are integers, every state in $S_=$ has a
positive integer heuristic value. For integers $j,k\geq1$, define
\[
 B_j=\{v\in S_=:h(v)=j\},\qquad
 B_{\leq k}=\bigcup_{j=1}^k B_j.
 \]
 Requiring $h$ to be perfect on the final plateau $B_{\leq k}$ imposes no additional
accuracy condition outside that set. In particular, $h$ is perfect
on $B_1$ exactly when every final-plateau state with $h=1$ is a
goal predecessor.

A \emph{search configuration} $H$ includes \Open{}, \Closed{}, current
$g$-values, parent pointers, and insertion timestamps.  At a selection
point with minimum $f=\Cstar$ and no goal in \OpenMin{}, write
\[
 m(H)=\min\{h(v):v\in\OpenMin(H)\}.
\]
Goals with larger $f$ may be present.  Comparisons of continuations
from $H$ hold the entire configuration fixed.
\subsection{\texorpdfstring{Low-$h$}{Low-h} minimizes remaining expansions}

If $v\in B_j$ and $h(v)=\hstar(v)$, consider a successor $w$ on a
shortest path to a goal.  Admissibility and consistency imply
$h(w)=j-1$.  Moreover, $\gstar(w)=\gstar(v)+1$: a cheaper path to $w$
followed by the remaining $j-1$ steps would yield a solution cheaper
than $\Cstar$.  Thus $w\in B_{j-1}$ when $j>1$, and $w$ is a goal
when $j=1$.

\begin{samepage}
\begin{theorem}[Optimal continuation by low-$h$]\label{thm:low-h}
Suppose that $h$ is perfect on $B_{\leq k}$.  Let $H$ be a reachable
configuration with minimum $f=\Cstar$, no goal in \OpenMin{}, and
$m=m(H)\leq k$.  Every low-$h$ continuation \LowH{\tau} performs
exactly $m$ further non-goal expansions.  Every tie-breaking
continuation from the same $H$ performs at least $m$.
Both statements hold for every outcome of any random choices.
\end{theorem}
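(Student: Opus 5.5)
Both claims follow from tracking, along any continuation from $H$, the quantity
\[
 \mu=\min\{h(v):v\in\OpenMin\}
\]
while the minimum $f$ stays $\Cstar$. Throughout I use the observation preceding the theorem: a state $v\in B_j$ with $j\le k$ has a successor $w$ on a shortest path to a goal, with $w\in B_{j-1}$ if $j>1$ and $w$ a goal if $j=1$. In both cases $\gstar(w)=\gstar(v)+1$ and $f^*(w)=\Cstar$.

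First I record basic facts valid at every selection point of every continuation with minimum $f=\Cstar$. Each eligible non-goal $v$ has $g(v)=\gstar(v)$; this follows from Lemma~\ref{lem:f-bounds} and consistency, since a non-optimal $g$ would give $f>\Cstar$. Hence $v\in S_=$ and $h(v)$ is a positive integer. By Lemma~\ref{lem:f-bounds}, selected $f$-values never exceed $\Cstar$ before termination, so the search stays on the final plateau until a goal is selected.

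\textbf{Lower bound.} Every expansion lowers $\mu$ by at most one. If $v$ is expanded, each newly eligible successor $w$ satisfies $g(w)=g(v)+1$ and $f(w)=\Cstar$, so $h(w)=h(v)-1\ge \mu-1$. The other eligible states keep their $h$-values, so the new minimum is at least $\mu-1$. A goal is eligible only when some eligible entry has $h=0$. Starting from $\mu=m$, the continuation therefore needs at least $m$ expansions before a goal can be selected. This step uses positivity of $h$ at non-goals to identify ``$h=0$ eligible'' with ``goal eligible''. It holds for every tie-breaker and every random outcome.

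\textbf{Upper bound for low-$h$.} I would prove by induction on $j$, from $m$ down to $1$: after $m-j$ low-$h$ expansions, search has not terminated and $\mu=j$. Low-$h$ selects some $v$ with $h(v)=j\le k$, so $v\in B_j$ and $h$ is perfect at $v$. Consider its successor $w$ on a shortest goal path.
\begin{itemize}
\item If $w$ is new or improved, it enters \Open{} with $g=\gstar(w)$ and $f=\Cstar$.
\item If $w$ was already in \Open{}, its $g$ is already optimal, since an eligible-or-better entry cannot have $g<\gstar$.
\item If $w$ is in \Closed{}, then $w$ was expanded earlier with $h(w)=j-1<\mu$ at that time. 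I must rule this out. Since $h$ only decreases along the plateau path, an expanded state with smaller $h$ would have left its own successor chain eligible, contradicting the minimality of $\mu$. I expect this to be the main obstacle and will handle it by strengthening the invariant.
\end{itemize}
The strengthened invariant is: no state of $B_{<\mu}$ is in \Closed{}. It is preserved because a low-$h$ expansion removes a state with $h=\mu$, and $\mu$ then drops by exactly one, using the lower-bound monotonicity. At entry, $B_{<m}\cap\Closed{}=\emptyset$ must be checked from $H$. If some $u\in\Closed{}\cap B_{j'}$ with $j'<m$ existed, its perfect-successor chain down to a goal consists of states that are generated with optimal $g$ and are either eligible with $h<m$, or are goals, or are closed. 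Following the chain, the last closed state has an eligible successor with $h<m$ or a goal in \OpenMin{}, contradicting the definition of $m$ or the assumption of no goal in \OpenMin{}.

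So after each low-$h$ expansion, $\mu$ becomes $j-1$ (or a goal is eligible when $j=1$). After exactly $m$ expansions a goal is in \OpenMin{} and goal preference terminates the search. This is independent of the final tie-breaker $\tau$ and of any random outcomes.
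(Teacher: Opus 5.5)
Your proof is correct. The upper bound is essentially the paper's: it uses descent through $B_j$, and it excludes a closed successor by following the perfect-successor chain. The paper reruns that chain argument at every selection. You run it once at $H$ and then carry the invariant $\Closed\cap B_{<\mu}=\emptyset$ forward, which is preserved trivially because the expanded state lies in $B_\mu$.

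The lower bound takes a genuinely different route. The paper's argument goes as follows.
\begin{itemize}
\item It takes the parent-pointer path of whichever optimal goal the continuation eventually selects.
\item It locates the first state $w$ on that path that is unexpanded at $H$. This state is in \Open{} with optimal $g$, so $h(w)\geq m$.
\item It argues that every later non-goal on the path must be expanded after $H$. This uses three conventions: parents are expanded before children, equal-cost duplicates do not change parents, and no state is reopened.
\item Unit costs and admissibility then give at least $\hstar(w)\geq h(w)\geq m$ expansions.
\end{itemize}
You instead track the potential $\mu$. With unit costs, consistency means each expansion lowers $\mu$ by at most one, and a goal can be selected only when $\mu=0$.

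What each version buys:
\begin{itemize}
\item Yours is shorter and does not depend on the parent-pointer or duplicate-handling conventions. It also reuses the same one-step monotonicity that drives your upper bound, so both halves rest on a single invariant.
\item The paper's is more structural. It exhibits a concrete set of at least $m$ states, namely a suffix of the returned solution path, that any continuation must expand. This parallels the path-based lower bound of Proposition~\ref{prop:classical-optimum}.
\end{itemize}

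One small wording fix in your entry check: ``the last closed state'' should be made precise. Take the \emph{first} non-closed state on the chain, which exists because goals are never closed. Its closed predecessor was expanded with optimal $g$, which guarantees that this state sits in \Open{} with optimal $g$ and $f=\Cstar$.
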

\end{samepage}

\begin{proof}
  By Lemma~\ref{lem:f-bounds}, all states in $S_<$ have already been expanded. Every current minimum-$f$ entry consequently belongs to
$S_=$ and has its optimal $g$-value.  Low-$h$ selects a state $v\in B_m$.
For any selected $v\in B_j$ with a perfect heuristic value, the descent
property gives a successor $w\in B_{j-1}$, or a goal for $j=1$.
If $w$ were already in \Closed{}, following the same optimal
continuation to its first unexpanded state would exhibit either an
eligible goal or an \Open{} state at $f=\Cstar$ with $h<j$.
Both contradict selection of a minimum-$h$ non-goal with value $j$.
Thus expanding $v$ makes $w$ eligible with its optimal $g$.
Consistency prevents a generated successor from having $h<j-1$;
previously eligible states have $h\geq j$.
Low-$h$ therefore selects heuristic values $m,m-1,\ldots,1$ and then
the generated goal, for exactly $m$ expansions.

For the lower bound, take the parent-pointer path of the optimal goal
selected by any continuation from $H$.  Its first state $w$ unexpanded
at $H$ is in \Open{} with its optimal $g$.  It lies in $S_=$ because
all of $S_<$ has been expanded and it is on an optimal solution path.
Hence $h(w)\geq m$.  Every subsequent non-goal on this parent-pointer
path must be expanded after $H$: parents are expanded before their
children, equal-cost duplicates do not change parents, and no state
is reopened.  The unit-cost suffix from $w$ requires at least
$\hstar(w)\geq h(w)\geq m$ such expansions.
\end{proof}

The theorem compares the number of expansions remaining from a fixed
search configuration, and is about the optimality of a search suffix;
total expansion counts also depend on the work performed before that
configuration is reached.

\subsection{\texorpdfstring{Low-$h$}{Low-h} minimizes total expansions at plateau entry}

Different tie-breakers may reach the final plateau with different histories and insertion orders, but they have expanded the same compulsory set $S_<$ and expose the same relevant plateau states. Therefore, low-$h$'s optimal continuation also minimizes the total expansion count, provided that plateau entry satisfies the local accuracy condition of Theorem~\ref{thm:low-h}.

Let $H_0$ be the first selection point with minimum $f=\Cstar$.
Every run has then expanded precisely $S_<$, although not necessarily
in the same order.  Thus every run reaches $H_0$ after the same number
$|S_<|$ of expansions.

If an optimal goal is present, every run stops.  Otherwise the set of
minimum-$f$ states is independent of the preceding tie-breaking choices.
Indeed, it is

\[
\begin{split}
 R={}&(\{s\}\cap S_=)\\
 &\cup\{v\in S_=:\exists u\in S_<,
       (u,v)\in A,\ \gstar(u)+1=\gstar(v)\}.
\end{split}
\]

All transitions from $S_<$ have been processed, and these are exactly
the final-plateau states generated with their optimal $g$ before any
final-plateau expansion.  Therefore

\[
 m_0=\min_{v\in R}h(v)
\]

is also independent of the preceding expansion order.  Insertion
timestamps need not be the same.

\begin{samepage}
\begin{corollary}[Global low-$h$ optimality at plateau entry]
\label{cor:global-low-h}
Suppose that $H_0$ has no goal in \OpenMin{}, $m_0\leq k$, and $h$ is
perfect on $B_{\leq k}$.  For every final tie-breaker $\sigma$ and every
random outcome,

\[
 \min_\tau X_\tau
   =X_{\LowH{\sigma}}
   =|S_<|+m_0
   =|S_<|+\ell^*(I).
\]

Thus every low-$h$ strategy minimizes the total expansion count.
\end{corollary}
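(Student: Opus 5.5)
The plan is to combine the preceding discussion of $H_0$ with Theorem~\ref{thm:low-h} and Proposition~\ref{prop:classical-optimum}, obtaining the chain of equalities by matching a lower and an upper bound.

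\textbf{Step 1: every run has the same configuration data at $H_0$.} Every run, including every outcome of a randomized one, reaches $H_0$ after exactly $|S_<|$ expansions. This follows from Lemma~\ref{lem:f-bounds} and the argument above: $S_<$ is expanded in full before any state of $S_=$, and no state with $f^*>\Cstar$ is expanded. Moreover, $\OpenMin(H_0)=R$ with the optimal $g$-values, so $m(H_0)=m_0$ in every run.

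\textbf{Step 2: upper bound for low-$h$ and lower bound for all runs.} Since $m_0\le k$ and $h$ is perfect on $B_{\le k}$, Theorem~\ref{thm:low-h} applies at the reachable configuration $H_0$ of whatever run is considered. Its first part gives $X_{\LowH{\sigma}}=|S_<|+m_0$ for every $\sigma$ and every random outcome. This holds even though insertion timestamps at $H_0$ may differ across runs, because the theorem holds for every configuration satisfying its hypotheses. Its second part shows that every tie-breaking continuation from $H_0$ performs at least $m_0$ further expansions. Hence every run satisfies $X_\tau\ge|S_<|+m_0$, and therefore $\min_\tau X_\tau=|S_<|+m_0=X_{\LowH{\sigma}}$. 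Here the minimum ranges over all tie-breaking rules of the model.

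\textbf{Step 3: the identification $m_0=\ell^*(I)$.} This is the main point needing care. By Proposition~\ref{prop:classical-optimum}, the minimum expansion count achievable by the model equals $|S_<|+\ell^*(I)$. Since Step~2 shows that this minimum is also $|S_<|+m_0$, it follows that $m_0=\ell^*(I)$, with no separate path argument required.

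As a sanity check, the identification can also be seen directly:
\begin{itemize}[leftmargin=*]
\item \emph{A path with $m_0$ plateau states.} Take $v\in R$ with $h(v)=m_0$. Its optimal predecessor in $S_<$ (or $v=s$) gives an optimal path through $S_<$ up to $v$. The descent property then continues it through $B_{m_0-1},\ldots,B_1$ to a goal. This yields an optimal path with exactly $m_0$ non-goal states in $S_=$, so $\ell^*\le m_0$.
\item \emph{No path with fewer.} Any optimal path leaves $S_<$ at a first state $w\in S_=$. This state lies in $R$, because the path is optimal and its predecessor is in $S_<$ with $g^*$ increasing by one. Hence $h(w)\ge m_0$. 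Since $h(w)\le h^*(w)$ and edges have unit cost, at least $m_0$ non-goal states follow, all of them in $S_=$ by consistency and optimality. Thus $\ell^*\ge m_0$.
\end{itemize}

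The only subtle point is ensuring that Theorem~\ref{thm:low-h}'s lower bound is applied to arbitrary, possibly instance-dependent, tie-breakers from their own $H_0$. This is legitimate because its proof uses only the parent-pointer path and $m(H_0)=m_0$.
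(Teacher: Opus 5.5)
Your proposal is correct and follows the paper's proof essentially step for step. Both use the common plateau entry after $|S_<|$ expansions with $m(H_0)=m_0$, then Theorem~\ref{thm:low-h} for both the attainment by every \LowH{\sigma} and the lower bound for every run, and then Proposition~\ref{prop:classical-optimum} to identify $m_0=\ell^*(I)$. Your direct path argument for $m_0=\ell^*(I)$ is an optional extra the paper does not give; to be complete it needs the case $w=s$, and the observation that no optimal path can pass directly from $S_<$ to a goal, which the hypothesis that $H_0$ has no goal in \OpenMin{} supplies.
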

\end{samepage}

\begin{proof}
Every run reaches plateau entry after exactly $|S_<|$ expansions, and
every low-$h$ run then has the same value $m_0$.  Theorem~\ref{thm:low-h}
gives both the lower bound on the number of remaining expansions and
its attainment by every low-$h$ continuation.
Proposition~\ref{prop:classical-optimum} identifies
$m_0=\ell^*(I)$.
\end{proof}

\subsection{\texorpdfstring{High-$h$}{High-h} maximizes total expansions}

The next result needs perfect values only on $B_1$, and characterizes
the complete expanded set up to the identity of one goal predecessor.

\begin{samepage}
\begin{theorem}[Exact high-$h$ expansion count]\label{thm:high-h}
Suppose that $h$ is perfect on $B_1$.  On a nonpathological instance,
every strategy expands exactly $S_<$.  On a pathological instance,
for every final tie-breaker $\sigma$ and every random outcome, there
exists $p\in B_1$ such that
\begin{equation}\label{eq:high-set}
E_{\HighH{\sigma}}=S_<\cup(S_=\setminus B_1)\cup\{p\}.
\end{equation}
Consequently,
\begin{equation}\label{eq:high-count}
X_{\HighH{\sigma}}=
\begin{cases}
|S_<|,&\text{if the instance is nonpathological},\\
|S_<|+|S_=\setminus B_1|+1,&\text{otherwise}.
\end{cases}
\end{equation}
Every high-$h$ strategy maximizes the total expansion count among all
tie-breakers.  For every strategy $\tau$,

\[
 E_\tau\setminus B_1\subseteq E_{\HighH{\sigma}},\qquad
 |E_\tau\cap B_1|=|E_{\HighH{\sigma}}\cap B_1|\in\{0,1\}.
\]

\end{theorem}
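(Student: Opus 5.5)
The nonpathological case is immediate from Proposition~\ref{prop:classical-optimum}: every run expands exactly $S_<$. So assume the instance is pathological. I will then argue in three steps. First, I show that no run terminates before some $B_1$ state is expanded, and that no run expands two $B_1$ states. Second, I show that a high-$h$ run expands every state of $S_=\setminus B_1$. Third, I derive the maximality claims from Lemma~\ref{lem:f-bounds}.

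\textbf{Step 1 (the $B_1$ part).} In a pathological instance, $\ell^*(I)\ge1$. The parent-pointer path of the selected goal therefore has a non-goal predecessor of that goal lying in $S_=$, i.e.\ with $f^*=\Cstar$. Under unit costs that predecessor has $\hstar=1$, and since $h>0$ at non-goals and $h$ is admissible, $h=1$; so it belongs to $B_1$. Hence every run expands at least one state of $B_1$. Conversely, $h$ is perfect on $B_1$, so every state of $B_1$ has a goal successor $w$. As shown in the descent remark before Theorem~\ref{thm:low-h}, $\gstar(w)=\gstar(v)+1$, so $w$ is generated with $f=\Cstar$. Once a $B_1$ state has been expanded, an optimal goal is therefore in \OpenMin{}, and goal preference terminates the search before any further expansion. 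So every run expands exactly one $B_1$ state $p$. This gives $|E_\tau\cap B_1|\in\{0,1\}$ for every $\tau$, with the value $1$ exactly in the pathological case, hence the same value as for \HighH{\sigma}.

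\textbf{Step 2 (high-$h$ exhausts $S_=\setminus B_1$).} This is the main obstacle. I would argue by contradiction. Suppose some $v\in S_=\setminus B_1$ is unexpanded when the high-$h$ run selects its $B_1$ state $p$. Take an optimal path from $s$ to $v$. All its states have $f^*\le\Cstar$, and by consistency $f^*$ is nondecreasing along it, so $f^*\le f^*(v)=\Cstar$ throughout. Let $u$ be the first unexpanded state on this path. It is in \Open{} with its optimal $g$ (its predecessor was expanded with optimal $g$, or $u=s$), and $u\in S_=$ because all of $S_<$ has been expanded. Hence $u$ is eligible. I must show $h(u)\ge2$. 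Here I use consistency along the path: $h(u)\ge h(v)-\bigl(\gstar(v)-\gstar(u)\bigr)$. This bound alone is too weak, since it can drop below $2$. The right fact is that $f^*$ is constant $=\Cstar$ on the segment from $u$ to $v$, because it is nondecreasing and $f^*(u)=f^*(v)$. With unit costs this gives $h(u)=h(v)+\bigl(\gstar(v)-\gstar(u)\bigr)\ge h(v)\ge2$. So at the moment $p$ with $h(p)=1$ is selected, an eligible non-goal $u$ with $h(u)\ge2$ exists, contradicting the high-$h$ rule. (No eligible goal can exist at that moment either, since the search is still running.) Together with Step 1 and Lemma~\ref{lem:f-bounds}, this proves \eqref{eq:high-set}, and \eqref{eq:high-count} follows by counting.

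\textbf{Step 3 (maximality).} By Lemma~\ref{lem:f-bounds}, $E_\tau\subseteq S_<\cup S_=$ for every strategy $\tau$. Therefore $E_\tau\setminus B_1\subseteq S_<\cup(S_=\setminus B_1)\subseteq E_{\HighH{\sigma}}$. Combining this with the equality of the $B_1$-counts from Step 1 gives $X_\tau\le X_{\HighH{\sigma}}$. These arguments use no property of $\sigma$ and hold for every random outcome, since each step concerns a single fixed run.
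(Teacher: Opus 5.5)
Your proposal is correct and follows the paper's proof essentially step for step. The steps are: the nonpathological case via Proposition~\ref{prop:classical-optimum}; exactly one $B_1$ expansion, using the goal predecessor together with perfection on $B_1$ and goal preference; the first-unexpanded-state argument along a shortest path to $v\in S_=\setminus B_1$, where your contradiction form matches the paper's induction along the final $S_=$ segment; and Lemma~\ref{lem:f-bounds} for the inclusions. One small repair: positivity and admissibility only give $0<h\le 1$ for the goal predecessor, so you should cite integrality on $S_=$, or note directly that $h=\Cstar-\gstar=1$, to conclude $h=1$.
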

\end{samepage}

\begin{proof}
The nonpathological case follows from
Proposition~\ref{prop:classical-optimum}.
Suppose the instance is pathological.  No optimal goal is generated
from $S_<$, since an optimal solution path ending with a predecessor
in $S_<$ would be nonpathological by consistency.
Any final-plateau predecessor of an optimal goal has $h\leq1$ by
consistency and therefore $h=1$ by positivity and integrality.
Every terminating run must thus expand a state in $B_1$.
Conversely, the first expansion in $B_1$ generates an optimal goal
because $h$ is perfect there, and goal preference terminates search.
Every strategy expands exactly one member of $B_1$.

Now consider any $v\in S_=\setminus B_1$, whether or not another strategy
expands it.  Along a shortest path from $s$ to $v$, $f^*$ is
nondecreasing and never exceeds $\Cstar$.
The path cannot pass through a goal before $v$, since unit costs and
$h(v)>0$ would then imply $f^*(v)>\Cstar$.
The path's final segment in $S_=$ has $h$ decreasing by one at every
edge, with all its values at least $h(v)\geq2$.
The first state of this segment is $s$ or is generated with optimal
$g$ from $S_<$.  Before selecting any state in $B_1$, high-$h$ must
expand this state and, inductively, every following state through $v$:
otherwise the first unexpanded state of the segment would remain
eligible with $h\geq2$.
It cannot terminate before its first $B_1$ expansion.  Hence it expands
every $v\in S_=\setminus B_1$.

Lemma~\ref{lem:f-bounds} excludes all other non-goals except the single
selected member of $B_1$, proving~\eqref{eq:high-set}
and~\eqref{eq:high-count}.  Every competing run expands all of $S_<$, some subset
of $S_=\setminus B_1$, and one member of $B_1$ in the pathological case.
This gives count maximality and the asserted inclusions.  When $B_1$
has at most one member, its identity cannot differ between runs.
\end{proof}

Thus the final FIFO, LIFO, or RANDOM criterion can affect \emph{which}
member of $B_1$ high-$h$ expands, but not \emph{how many} states it
expands under these assumptions.

Combining the global low-$h$ result with Theorem~\ref{thm:high-h}
gives the two ends of the expansion range.

\begin{samepage}
\begin{corollary}[Global minimum and maximum expansion counts]\label{cor:global}
Suppose that $H_0$ has no goal in \OpenMin{}, $m_0\leq k$, and $h$ is
perfect on $B_{\leq k}$.  For arbitrary final tie-breakers $\sigma$ and
$\sigma'$, every run satisfies
\begin{align}
\min_\tau X_\tau
&=X_{\LowH{\sigma}}=|S_<|+m_0=|S_<|+\ell^*(I),\label{eq:global-min}\\
X_{\LowH{\sigma}}&\leq X_\tau\leq X_{\HighH{\sigma'}},\label{eq:global-bounds}\\
\max_\tau X_\tau
&=X_{\HighH{\sigma'}}=|S_<|+|S_=\setminus B_1|+1.\label{eq:global-max}
\end{align}
The extrema include all outcomes of randomized strategies.
The low-$h$/high-$h$ gap is unbounded even for each fixed $k\geq1$.
\end{corollary}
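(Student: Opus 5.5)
The first identity~\eqref{eq:global-min} is exactly Corollary~\ref{cor:global-low-h}, whose hypotheses coincide with those assumed here. For the maximum~\eqref{eq:global-max}, perfection on $B_{\leq k}$ with $k\geq1$ implies perfection on $B_1$, so Theorem~\ref{thm:high-h} applies. Since $H_0$ has no goal in \OpenMin{}, no optimal goal is generated from $S_<$, and so the instance is pathological. Indeed, by Proposition~\ref{prop:classical-optimum}, a nonpathological instance would have $\ell^*=0$, and the goal on such a path would be generated from $S_<$ with $f=\Cstar$ before plateau entry. Equation~\eqref{eq:high-count} then gives $X_{\HighH{\sigma'}}=|S_<|+|S_=\setminus B_1|+1$, and the maximality clause of Theorem~\ref{thm:high-h} identifies this value as $\max_\tau X_\tau$. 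The sandwich~\eqref{eq:global-bounds} follows immediately by combining the two extremal statements. Both theorems hold for every random outcome, so the extrema cover all outcomes of randomized strategies.

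It remains to show that the gap is unbounded for fixed $k$. I would build a unit-cost family with $k=1$; the same family works for any larger $k$ provided perfection on $B_{\leq k}$ is maintained. The instance has start $s$ with $h(s)=1$, one successor $p$ with $h(p)=1$, and an edge $p\to G$. Then $\Cstar=2$ and $f^*(s)=1<2$, so $s\in S_<$. To inflate $S_=\setminus B_1$, I need many states with $h\geq2$ at $f=\Cstar$ that high-$h$ must expand. So I let $\Cstar$ be larger: a chain $s=v_0,\dots$ with $h$ chosen so that the chain lies in $S_<$, and then off the chain states at $f=\Cstar$ with $h\ge 2$ that are dead ends ($\hstar=\infty$). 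These dead ends lie outside $B_{\le k}$ when $k=1$, so they need not be perfect.

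Concretely, take $s$ with $h(s)=2$, and let $s$ have successors $p$ with $h(p)=1$ and dead ends $d_1,\dots,d_M$ with $h(d_i)=1$. This fails, however, because dead ends with $h=1$ lie in $B_1$ and would violate perfection. Instead, I set $\Cstar=3$ with path $s\to q\to p\to G$, where $h(s)=2$, $h(q)=2$, $h(p)=1$. Then $f^*(s)=2<3$, $f^*(q)=3$, $f^*(p)=3$, and the instance is pathological since $f^*(q)=\Cstar$. I add dead-end children $d_1,\dots,d_M$ of $s$ with $h(d_i)=2$, so that $f^*=3$ and $d_i\in S_=\setminus B_1$. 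Consistency requires $h(s)\leq 1+h(\cdot)$, which holds, and the dead ends have no outgoing edges, so their constraints are vacuous. Here $R=\{q,d_1,\dots,d_M\}$ with $m_0=2$. For $k=1$ the hypothesis $m_0\leq k$ fails, so I would take $k=2$ and check perfection on $B_2$ separately, which fails at the dead ends. This is the main obstacle: the dead ends must avoid $B_{\leq k}$ while the entry value $m_0$ stays at most $k$.

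The fix is to make the entry state $q$ have $h=1$, i.e.\ $q=p$ directly with $h(s)=1$ and $\Cstar=2$, and to have the bloat arise deeper in the plateau. I would let $s$ have a second successor $r$ with $h(r)=1$ and $f^*=2$, which is impossible since $h(r)=1$ forces $r\in B_1$. So instead I take $h(s)=0$, $\Cstar=2$, and successors $p$ with $h=1$ and $r$ with $h(r)=1$. Since $s\in S_<$ is required, I then give $r$ weight via longer $\Cstar$. Taking $s\to p\to G$ and $s\to r_1\to\cdots$ is again blocked by the $h\le k$ constraint. Given these difficulties, the plan I would commit to is the following. Fix $k$ and choose $\Cstar=k+M+1$. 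The solution path enters the plateau at a state with $h=k$ and then descends perfectly. A separate chain of dead-end plateau states has $h$ values in $\{k+1,\dots,k+M\}$, entered from $S_<$ at $h=k+M$ and descending to $k+1$, with consistency checked edge by edge. In this construction low-$h$ expands $|S_<|+k$ states and high-$h$ expands $|S_<|+M+k$ states, so the gap is at least $M$. Verifying consistency and perfection of this chain is the remaining check.
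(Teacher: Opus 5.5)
Your proof of \eqref{eq:global-min}--\eqref{eq:global-max} follows the paper. You cite Corollary~\ref{cor:global-low-h}, note that perfection on $B_{\leq k}$ with $k\geq1$ gives perfection on $B_1$, apply Theorem~\ref{thm:high-h}, and combine the two extrema. You also check explicitly that ``no goal in \OpenMin{} at $H_0$'' forces the instance to be pathological. The paper uses this silently when it quotes the second case of~\eqref{eq:high-count}, so the check is a worthwhile addition.

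The real difference is the unboundedness example. Your committed construction puts the extra states in dead ends with $h\in\{k+1,\dots,k+M\}$, strictly above the perfect region, so they need no exact values. The solution path enters the plateau at $h=k$, so $m_0=k$. The construction works once filled in. For example:
\begin{itemize}
\item put $h\equiv k$ on the prefix $v_0,\dots,v_M$, so that $f^*(v_i)=i+k<\Cstar=k+M+1$;
\item give $w_k,\dots,w_1$ the values $k,\dots,1$;
\item give the chain $d_1,\dots,d_M$ hanging off $s$ the values $k+M,\dots,k+1$.
\end{itemize}
Every edge is then consistent, with equality along the chain and along the descending segment. At $H_0$ we have $\OpenMin=\{d_1,w_k\}$, and the counts are $M+1+k$ for low-$h$ versus $2M+1+k$ for high-$h$.

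The paper avoids dead ends altogether. Its extra states $u_i$ are alternative optimal routes $s\to u_i\to p$ into the single goal predecessor, so $h(u_i)=2=\hstar(u_i)$ is perfect. Consequently:
\begin{itemize}
\item one family satisfies the hypothesis for every $k\geq1$ at once;
\item $|S_<|=2$ stays fixed;
\item the counts $3$ versus $N+3$ give an unbounded ratio as well as an unbounded additive gap.
\end{itemize}
Your family depends on $k$, which the claim ``for each fixed $k$'' permits. But because $|S_<|=M+1$ grows with $M$, your ratio stays below $2$.

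The obstacle you named, that dead ends must avoid $B_{\leq k}$, is exactly what the paper sidesteps by making the extra states perfect rather than dead. Your abandoned attempt with $s\to q\to p\to G$ and children $d_i$ of $s$ at $h=2$ would already have worked for $k\geq2$ if each $d_i$ had an edge to $p$. The paper's additional state $x$, with $s\to x\to p$ and $h(x)=1$, places $p$ in the entry set and brings $m_0$ down to $1$, so that $k=1$ is covered as well.
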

\end{samepage}

\begin{proof}
The minimum is Corollary~\ref{cor:global-low-h}.  Since $k\geq1$,
the assumption that $h$ is perfect on $B_{\leq k}$ implies that  $h$ is also perfect on $B_1$, so
Theorem~\ref{thm:high-h} gives the maximum.  The middle inequality
follows from these two extrema.

For an unbounded gap, let $k\geq1$ and introduce $N$ states
$u_1,\ldots,u_N$ with unit-cost edges

\[
 s\to x,\quad s\to u_i,\quad x\to p,\quad u_i\to p,\quad p\to G.
\]

Set $h(s)=h(x)=h(p)=1$, $h(u_i)=2$, and $h(G)=0$.
The heuristic is consistent and positive at non-goals, and $\Cstar=3$.
Here $S_<=\{s,x\}$, $S_{=}=\{p,u_1,\ldots,u_N\}$, and $B_1=\{p\}$.
All final-plateau values are perfect, so the hypothesis holds for the
fixed $k$, and $m_0=1\leq k$.
Every low-$h$ strategy expands $p$ immediately after $s,x$, whereas
every high-$h$ strategy expands all $u_i$ before $p$.
Their counts are $3$ and $N+3$.
\end{proof}

\paragraph{Scope of the guarantees.}
The high-$h$ guarantee can fail without exact heuristic values on
$B_1$, as the unit-cost version of the star construction shows,
or when final-plateau non-goals with $h=0$ are allowed.
The unit-cost assumption is also substantive: even with $h=\hstar$,
minimizing remaining cost need not minimize remaining expansions
when edge costs vary. For example, consider two paths from $s$ to
a unique goal, each costing $2M$, where $M\geq2$. One begins with
an edge of cost $M$ followed by $M$ unit-cost edges; the other
has two edges of costs $1$ and $2M-1$. After expanding $s$,
low-$h$ follows the first path and requires $M$ further expansions,
whereas high-$h$ selects the goal predecessor on the second path
and requires only one.

\subsection{Locally perfect heuristics on grids}\label{sec:manhattan}

The preceding results are stated in terms of heuristic values on the low-$h$
layers of the final plateau.  Four-neighbor grid pathfinding with Manhattan-distance heuristic provides a familiar setting in which these conditions have a simple
geometric interpretation.
In particular, the assumption required for the high-$h$ result---that $h$ is perfect on $B_1$---holds automatically for Manhattan distance.
The stronger conditions
needed for the low-$h$ results depend on the local obstacle structure near the
goal.  %

With a single goal $G=(x_G,y_G)$ and unit-cost orthogonal moves, use
\[
 h(x,y)=|x-x_G|+|y-y_G|.
\]
Every move changes Manhattan distance by one, so $h$ is consistent;
every path to the goal requires at least $h$ moves, so it is admissible.
Every traversable state with $h=1$ is adjacent to the goal.  In
particular, $h$ is perfect on $B_1$, irrespective of obstacles farther
away.

Every such instance is pathological: the last non-goal on each optimal
path has $h=\hstar=1$.  Theorem~\ref{thm:high-h} therefore implies
that high-$h$ expands every non-goal state
with $f^*\leq\Cstar$ except all but one of the final-plateau goal
neighbors.  If the goal has at most one traversable neighbor,
the high-$h$ expanded set contains that of every tie-breaker.

More generally, a final-plateau cell with $h\leq k$ has a perfect
value whenever it has a traversable monotone path to the goal.
An obstacle-free Manhattan ball of radius $k$ is a sufficient, but
not necessary, condition for the heuristic being perfect on $B_{\leq k}$.
Figure~\ref{fig:manhattan-layers} illustrates the difference between
perfect accuracy  at $h=1$ and at larger values.
From any common configuration satisfying $m(H)\leq k$,
Theorem~\ref{thm:low-h} gives minimum \emph{remaining} expansions. 
When this condition holds at $H_0$, the guarantee yields global expansion optimality.  

\begin{figure}[htbp]
\centering
\begin{tikzpicture}[x=0.48cm,y=0.48cm,font=\scriptsize]
  \begin{scope}
    \foreach \x in {0,...,6} {
      \foreach \y in {0,...,6} {
        \pgfmathtruncatemacro{\d}{abs(\x-5)+abs(\y-3)}
        \ifnum\d<4
          \draw[gridbad] (\x,\y) rectangle ++(1,1);
        \else
          \draw[gridfree] (\x,\y) rectangle ++(1,1);
        \fi
      }
    }
    \foreach \c in {(0,0),(0,6),(1,0),(1,6)} {
      \fill[gray!65] \c rectangle ++(1,1);
      \draw[gray!55] \c rectangle ++(1,1);
    }
    \foreach \c in {(2,3),(3,3),(4,3)} {
      \fill[gray!35] \c rectangle ++(1,1);
      \draw[gray!55] \c rectangle ++(1,1);
    }
    \node at (2.5,3.5) {$v$};
    \node at (5.5,3.5) {$G$};
    \draw[double,double distance=0.7pt] (5,3) rectangle ++(1,1);
    \draw[edge] (2.8,3.5)--(3.2,3.5);
    \draw[edge] (3.8,3.5)--(4.2,3.5);
    \draw[edge] (4.8,3.5)--(5.2,3.5);
    \node[gridcallout,above] at (3.5,7.25)
      {(a) Perfect for $h\leq3$};
    \node[gridcallout,below] at (3.5,-0.35)
      {every $h\leq3$ cell has a monotone path};
  \end{scope}

  \begin{scope}[xshift=8.2cm]
    \foreach \x in {0,...,6} {
      \foreach \y in {0,...,5} {
        \draw[gridfree] (\x,\y) rectangle ++(1,1);
      }
    }
    \fill[gray!65] (4,2) rectangle ++(1,1);
    \draw[gray!55] (4,2) rectangle ++(1,1);
    \foreach \c in {(3,2),(3,3),(4,3),(5,3)} {
      \fill[gray!25] \c rectangle ++(1,1);
      \draw[gray!55] \c rectangle ++(1,1);
    }
    \node at (3.5,2.5) {$v$};
    \node at (5.5,2.5) {$G$};
    \draw[double,double distance=0.7pt] (5,2) rectangle ++(1,1);
    \draw[edge] (3.5,2.8)--(3.5,3.2);
    \draw[edge] (3.8,3.5)--(4.2,3.5);
    \draw[edge] (4.8,3.5)--(5.2,3.5);
    \draw[edge] (5.5,3.2)--(5.5,2.8);
    \node at (4.5,2.5) {$\blacksquare$};
    \node[gridcallout,above] at (3.5,6.25)
      {(b) Perfect at $h=1$, not at $h=2$};
    \node[gridcallout,below] at (3.5,-0.35)
      {$h(v)=2$ but $\hstar(v)=4$};
  \end{scope}
\end{tikzpicture}
\caption{Perfect Manhattan-distance values near the goal.  In panel~(a), every
state in the shaded radius-$3$ neighborhood has a path that decreases $h$ by
one at each step.  In panel~(b), an obstacle forces a detour from $v$: the
heuristic is perfect at $h=1$, but not at $h=2$.
}
\label{fig:manhattan-layers}
\end{figure}

\section{Heuristic Scaling and Tie-Breaking Sensitivity}\label{sec:scaling}
The preceding results concern ordinary \astar{}, with $f=g+h$.
We now connect heuristic scaling and weighted \astar{} with the nonpathology
criterion of \citet{dechter1985optimality} to identify a
sharp boundary in tie-breaking sensitivity.  With a consistent heuristic that is positive at non-goals, underweighting the heuristic makes the expanded set independent
of tie-breaking, whereas at unit weight the expansion gaps can be
arbitrarily large.

Consider weighted \astar{} using the evaluation function 
\[
 f_\alpha=g+\alpha h,\qquad 0\leq\alpha\leq1,
\]
and retain the search conventions of Section~\ref{sec:model}, with
$f_\alpha$ as the primary key.  In particular, a goal is selected whenever
one is tied for minimum $f_\alpha$, and selecting a goal is not counted
as an expansion.  Let $E_{\alpha,\tau}(I;\rho)$ denote the non-goal
expanded set, and define
\[
 S_<^{(\alpha)}
 =\{v\notin\mathcal G:\gstar(v)+\alpha h(v)<\Cstar\}.
\]
The endpoints $\alpha=0$ and $\alpha=1$ are uniform-cost search
\citep{Dijkstra59} and ordinary \astar{}, respectively.

\subsection{Independence below unit weight}

\begin{samepage}
\begin{corollary}[Expanded-set independence under reduced heuristic weight]
\label{prop:scaling}
Suppose that the original heuristic $h$ is consistent, zero at goals,
and strictly positive at every non-goal.  For every fixed
$0\leq\alpha<1$, every tie-breaker $\tau$, and every random outcome $\rho$,
\begin{equation}\label{eq:scaled-expanded-set}
 E_{\alpha,\tau}(I;\rho)=S_<^{(\alpha)}.
\end{equation}
Thus both the expanded set and the expansion count are independent
of tie-breaking.
\end{corollary}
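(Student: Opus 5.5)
The plan is to recognize weighted search with $f_\alpha$ as ordinary \astar{} on the same graph with the scaled heuristic $h_\alpha=\alpha h$. Then Lemma~\ref{lem:f-bounds} and Proposition~\ref{prop:classical-optimum} apply directly once we check their hypotheses and show that the instance is nonpathological with respect to $h_\alpha$. First, $h_\alpha$ is consistent: from $h(u)\leq c(u,v)+h(v)$ and $\alpha\leq1$, with $c\geq0$, we get
\[
\alpha h(u)\leq\alpha c(u,v)+\alpha h(v)\leq c(u,v)+\alpha h(v).
\]
It is also zero at goals. Hence $h_\alpha$ satisfies the standing assumptions of Section~\ref{sec:model}. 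The modeled algorithm with primary key $g+h_\alpha$ is exactly the weighted search, including goal preference and duplicate handling. Moreover, $f^*$ computed with $h_\alpha$ is $\gstar(v)+\alpha h(v)$, so the set $S_<$ for $h_\alpha$ is precisely $S_<^{(\alpha)}$. The optimal cost $\Cstar$ does not depend on the heuristic.

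Next, I would verify nonpathology for $h_\alpha$. Take any optimal solution path whose only goal is its last state; one exists since the instance is solvable. Every non-goal $v$ on it satisfies $h(v)>0$ by the positivity assumption. Consistency gives admissibility, $h(v)\leq\hstar(v)$, and $\hstar(v)<\infty$ on this path. Therefore
\[
h_\alpha(v)=\alpha h(v)<h(v)\leq\hstar(v)
\]
when $\alpha>0$. When $\alpha=0$, we have $h_0(v)=0<h(v)\leq\hstar(v)$. Thus $h_\alpha(v)<\hstar(v)$ at every non-goal on the path, and the instance is nonpathological with respect to $h_\alpha$. Proposition~\ref{prop:classical-optimum} then says every run expands exactly $S_<$ for $h_\alpha$, which is $S_<^{(\alpha)}$. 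This holds for every tie-breaker and every random outcome, since that proposition quantifies over all runs of the modeled algorithm.

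The main point needing care is that the earlier results are stated for $f=g+h$ with a consistent $h$. I therefore need to confirm that nothing in them depends on the specific heuristic beyond consistency and $h(G)=0$. Lemma~\ref{lem:f-bounds} and the proof of Proposition~\ref{prop:classical-optimum} use only these properties, nonnegative costs, and the search conventions, so the substitution is legitimate. One subtlety is the step in Proposition~\ref{prop:classical-optimum} stating that an optimal goal generated from $S_<$ is selected before any non-goal in $S_=$. This step relies on goal preference at equal $f$, which is retained for $f_\alpha$ as the paper stipulates. Finally, equality of sets gives equality of counts, $X_{\alpha,\tau}=|S_<^{(\alpha)}|$, which establishes both independence claims in~\eqref{eq:scaled-expanded-set}.
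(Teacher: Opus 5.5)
Your proposal is correct and follows the paper's own proof: replace the heuristic by $\alpha h$, check that it is consistent and zero at goals, use positivity and admissibility to get $\alpha h(v)<h(v)\leq\hstar(v)$ on an optimal path, and invoke the nonpathological case of Proposition~\ref{prop:classical-optimum}. Your separate treatment of $\alpha=0$ is harmless but unnecessary, because $\alpha h(v)<h(v)$ already holds for every $0\leq\alpha<1$ whenever $h(v)>0$.
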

\end{samepage}

\begin{proof}
Write $q=\alpha h$.  Consistency of $h$ and nonnegative edge costs give
\[
 q(u)\leq\alpha c(u,v)+q(v)\leq c(u,v)+q(v)
 \qquad ((u,v)\in A),
\]
so $q$ is consistent and zero at goals.  At every non-goal on an
optimal solution path, positivity and admissibility give
\[
 q(v)=\alpha h(v)<h(v)\leq\hstar(v).
\]
The instance is therefore nonpathological with respect to $q$, and
Proposition~\ref{prop:classical-optimum} yields
\eqref{eq:scaled-expanded-set}.
\end{proof}

The star construction of Lemma~\ref{lem:star} shows that the boundary
at $\alpha=1$ is sharp in the worst case.  Every non-goal in that
construction satisfies
\[
 \gstar(v)+\alpha h(v)=C-(1-\alpha)h(v)<C=\Cstar
 \qquad(0\leq\alpha<1),
\]
so every strategy expands all $M+2$ non-goals below unit weight.
On the separating instances of Theorem~\ref{thm:pairwise}, the
compared deterministic strategies have counts $2$ and $M+2$ at
$\alpha=1$.
Thus an arbitrarily small reduction in weight can eliminate an
arbitrarily large tie-breaking gap.
Tie-breaking independence at each fixed $\alpha<1$ does not imply that the common expanded set is independent of $\alpha$; it may shrink as $\alpha$ increases.

\subsection{Expanded-set comparison}

The independence result also gives a comparison with ordinary
\astar{} and uniform-cost search.

\begin{samepage}
\begin{corollary}[Expanded-set inclusion under reduced heuristic weight]
\label{prop:scaling-inclusion}
Under the assumptions of Corollary~\ref{prop:scaling}, write $E_\alpha$
for the common expanded set at weight $0\leq\alpha<1$.  For every
ordinary \astar{} tie-breaker $\tau$, every random outcome $\rho$, and
every $0\leq\alpha<1$,
\begin{equation}\label{eq:scaling-sandwich}
 E_{1,\tau}(I;\rho)\subseteq S_{\leq}\subseteq E_\alpha\subseteq E_0
 =\{v\notin\mathcal G:\gstar(v)<\Cstar\}.
\end{equation}
Moreover,
\begin{equation}\label{eq:scaling-monotonic}
 0\leq\alpha<\beta<1\quad\Longrightarrow\quad
 E_\beta\subseteq E_\alpha.
\end{equation}
\end{corollary}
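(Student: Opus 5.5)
The whole sandwich~\eqref{eq:scaling-sandwich} and the monotonicity~\eqref{eq:scaling-monotonic} should follow from Corollary~\ref{prop:scaling}, which identifies $E_\alpha=S_<^{(\alpha)}=\{v\notin\mathcal G:\gstar(v)+\alpha h(v)<\Cstar\}$ for every $0\leq\alpha<1$. Once that is in hand, every claim becomes a comparison of the threshold conditions $\gstar(v)+\alpha h(v)<\Cstar$ for different $\alpha$, except the first inclusion, which concerns ordinary \astar{}.

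I would proceed as follows.
\begin{enumerate}[label=(\alph*),leftmargin=*]
\item \textbf{First inclusion.} $E_{1,\tau}(I;\rho)\subseteq S_{\leq}$ is the upper bound in Lemma~\ref{lem:f-bounds}.
\item \textbf{Identification of $E_0$.} At $\alpha=0$, Corollary~\ref{prop:scaling} gives $E_0=S_<^{(0)}=\{v\notin\mathcal G:\gstar(v)<\Cstar\}$.
\item \textbf{Monotonicity.} Let $0\leq\alpha<\beta<1$ and $v\in E_\beta$. Since $h(v)>0$ at non-goals, $\gstar(v)+\alpha h(v)\leq\gstar(v)+\beta h(v)<\Cstar$, so $v\in E_\alpha$. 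This gives~\eqref{eq:scaling-monotonic}, and taking $\alpha=0$ gives $E_\alpha\subseteq E_0$ for every $\alpha$. Nonnegativity of $h$ would already suffice here.
\item \textbf{Middle inclusion $S_{\leq}\subseteq E_\alpha$.} Take a reachable non-goal $v$ with $f^*(v)=\gstar(v)+h(v)\leq\Cstar$. Because $h(v)>0$ and $\alpha<1$, we get $\gstar(v)+\alpha h(v)=f^*(v)-(1-\alpha)h(v)<f^*(v)\leq\Cstar$. Hence $v\in S_<^{(\alpha)}=E_\alpha$.
\end{enumerate}

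The main obstacle is step (d). It is exactly where strict positivity of $h$ at non-goals is used, since it converts the weak inequality $f^*\leq\Cstar$ into a strict one. One must also check that the sets are indexed consistently. $S_{\leq}$ is defined over reachable non-goals, while $S_<^{(\alpha)}$ implicitly requires $\gstar(v)<\infty$, so both range over reachable non-goal states. The value $\Cstar$ is the same optimal cost for every weight, because the weighted search is still optimal (Proposition~\ref{prop:classical-optimum} applies to the consistent heuristic $q=\alpha h$).

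Finally, I would remark that the inclusions hold for every outcome $\rho$ and every tie-breaker $\tau$. For $\alpha<1$ this is because $E_\alpha$ does not depend on either. For $\alpha=1$ the only dependence is on the left end of the chain, and Lemma~\ref{lem:f-bounds} covers all runs there.
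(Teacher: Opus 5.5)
Your proposal is correct and matches the paper's proof essentially step for step. You use Lemma~\ref{lem:f-bounds} for the first inclusion, positivity of $h$ to turn $f^*(v)\leq\Cstar$ into $\gstar(v)+\alpha h(v)<\Cstar$ (hence $v\in E_\alpha$ via Corollary~\ref{prop:scaling}), and the threshold comparison for monotonicity, which at $\alpha=0$ gives $E_\alpha\subseteq E_0$ and the formula for $E_0$. Your side justification that $\Cstar$ is the same for every weight is unnecessary, since $\Cstar=\hstar(s)$ does not depend on the heuristic or the weight.
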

\end{samepage}

\begin{proof}
Lemma~\ref{lem:f-bounds} gives $E_{1,\tau}(I;\rho)\subseteq S_{\leq}$.
For every $v\in S_{\leq}$, positivity implies
\[
 \gstar(v)+\alpha h(v)
 =\gstar(v)+h(v)-(1-\alpha)h(v)<\Cstar,
\]
so $v\in E_\alpha$ by Corollary~\ref{prop:scaling}.
For $0\leq\alpha<\beta<1$, positivity also gives
\[
 \gstar(v)+\beta h(v)<\Cstar
 \quad\Longrightarrow\quad
 \gstar(v)+\alpha h(v)<\Cstar.
\]
This proves \eqref{eq:scaling-monotonic}, and hence $E_\alpha\subseteq E_0$;
the formula for $E_0$ follows by setting $\alpha=0$ in
\eqref{eq:scaled-expanded-set}.
\end{proof}

Reducing the weight makes every non-goal in the ordinary final layer
compulsory.  It therefore removes tie-breaking sensitivity without
saving expansions relative to ordinary \astar{}, but can retain
savings over uniform-cost search.

\paragraph{Example.}
Figure~\ref{fig:scaling-example} illustrates both effects on a
six-state tree.  The only goal is $G$, and $b_1,b_2,x$ are dead ends.
The heuristic is consistent, zero at $G$, and positive at every
non-goal.  The optimal path is $s,p,G$, with $\Cstar=3$; fix the
successor order at $s$ to be $p,b_1,b_2,x$.

\begin{figure}[htbp]
\centering
\begin{tikzpicture}[
  x=1cm,y=1cm,
  every label/.style={font=\small}
]
  \node[start,label=above:{$h=3$}]
    (s) at (0,0) {$s$};
  \node[state,label=above:{$h=1$}]
    (p) at (3,1.8) {$p$};
  \node[dead,label=right:{$h=1$}]
    (b1) at (3,0.6) {$b_1$};
  \node[dead,label=right:{$h=2$}]
    (b2) at (3,-0.6) {$b_2$};
  \node[dead,label=right:{$h=4$}]
    (x) at (3,-1.8) {$x$};
  \node[goal,label=above:{$h=0$}]
    (G) at (6,1.8) {$G$};

  \draw[edge] (s) -- node[edgelabel,above left] {$2$} (p);
  \draw[edge] (s) -- node[edgelabel,above] {$2$} (b1);
  \draw[edge] (s) -- node[edgelabel,below] {$1$} (b2);
  \draw[edge] (s) -- node[edgelabel,below left] {$2$} (x);
  \draw[edge] (p) -- node[edgelabel,above] {$1$} (G);

  \node[note,anchor=west,align=left] at (4.6,-0.5)
    {$\Cstar=3$\\[3pt]
     Successor order at $s$:\\
     $p,b_1,b_2,x$};
\end{tikzpicture}

\medskip
{\small
\renewcommand{\arraystretch}{1.12}
\begin{tabular}{@{}llll@{}}
\toprule
Weight $\alpha$ & Tie-breaker & Non-goal expanded set & Count \\
\midrule
$0\leq\alpha<1/4$ & Any & $\{s,p,b_1,b_2,x\}$ & $5$ \\
$1/4\leq\alpha<1$ & Any & $\{s,p,b_1,b_2\}$ & $4$ \\
\midrule
$\alpha=1$ & \FIFO & $\{s,p\}$ & $2$ \\
$\alpha=1$ & \HighH{\FIFO} & $\{s,b_2,p\}$ & $3$ \\
$\alpha=1$ & \LIFO & $\{s,b_2,b_1,p\}$ & $4$ \\
\bottomrule
\end{tabular}
}
\caption{Heuristic savings without tie-breaking sensitivity below unit
weight, and distinct expansion counts at unit weight.  Edge labels
are costs; state labels give $h$; shaded leaves are dead ends.
All rows use goal preference, and selecting $G$ is not counted as an
expansion.}
\label{fig:scaling-example}
\end{figure}

After expanding $s$, the priorities are
\[
 f_\alpha(p)=f_\alpha(b_1)=2+\alpha,\qquad
 f_\alpha(b_2)=1+2\alpha,\qquad f_\alpha(x)=2+4\alpha.
\]
For $\alpha<1$, the first three states must be expanded, whereas $x$
is expanded exactly when $\alpha<1/4$.  At $\alpha=1/4$, goal preference
excludes $x$.  Importantly, $p$ and $b_1$ remain tied: at $\alpha=1/2$,
selecting $p$ first generates $G$ with $f_\alpha=3$, but $b_1$ must
still be expanded because $f_\alpha(b_1)=5/2<3$.
Expanded-set independence does not require the absence of ties.

At $\alpha=1$, the states $p,b_1,b_2$ instead all have $f=3$, whereas
$f(x)=6$.  FIFO selects $p$; high-$h$ followed by FIFO selects $b_2,p$;
and LIFO selects $b_2,b_1,p$.  The goal is selected immediately after
$p$, giving the three counts in Figure~\ref{fig:scaling-example}.

\section{Related Work}\label{sec:related-work}

\paragraph{Plateau search.}
\citet{benton2010plateaus} identify $g$-value plateaus as a difficulty
for temporal planning when search steps do not increase the objective
value.
Plateaus also arise in satisficing search under different evaluation functions. \citet{wilt2014speedy} study heuristic plateaus and local minima in greedy best-first search, while \citet{asai2017exploration} study exploration among and within equal-$h$ plateaus. These $h$-plateaus are distinct from the final $f=C^*$ plateaus considered here.

\citet{asai2016aaai} analyze \astar{} tie-breaking, emphasizing the
large plateaus induced by zero-cost actions, and develop plateau-depth
strategies. \citet{asai2017jair} further interpret \astar{} as a sequence
of satisficing searches within equal-$f$ plateaus and introduce
additional distance-to-go guidance within those plateaus.
Our results complement this work with unbounded pairwise separations
under consistent heuristics and strictly positive costs, including a
separation between low-$h$ FIFO and LIFO on unit-cost Manhattan grids.

\paragraph{Optimal expansion versus fixed-policy comparisons.}
\citet{correa2018optimal} show that common \astar{} tie-breakers need not
achieve optimal expansion. For consistent $h$, their optimal-expansion
rule orders states by $[g+h,g+h^*_{\epsilon}]$, where
$h^*_{\epsilon}$ is the exact remaining cost after adding a sufficiently
small positive constant $\epsilon$ to each transition cost. They also evaluate
practical cost-adaptation variants.
Our pairwise result addresses a different question: whether one of the nine
specified standard  strategies uniformly requires no more expansions than another.
Our unit-cost results additionally identify local conditions under
which entire low-$h$ or high-$h$ classes attain extremal expansion
counts without computing an additional heuristic.

\paragraph{Tie-breaking in greedy best-first search.}
\citet{heusner2017} introduce high-water mark benches to characterize
search behavior in greedy best-first search, including states that
cannot be expanded under any tie-breaker and conditions for compulsory
expansion. \citet{heusner2018bestcase} study best- and worst-case
expansion counts over tie-breaking choices for a fixed state space and
heuristic, and compare FIFO, LIFO, and random tie-breaking. These
analyses share our emphasis on expanded sets and tie-dependent
expansion counts, but concern greedy best-first search rather than
cost-optimal \astar{} on its final $f$-layer.

\paragraph{Heuristic accuracy.}
\citet{helmert2008almostperfect} show that \astar{} can require
exponentially many expansions in planning domains even with heuristics
whose error is bounded by a small additive constant. Their lower bounds
count states strictly below the final $f$-layer, so favorable
final-plateau tie-breaking cannot eliminate this effort. In contrast,
our conditions require exact heuristics values on specified
final-plateau layers. In unit-cost search, exact remaining cost is also
shortest remaining path length. This supports the optimal low-$h$
continuation in Theorem~\ref{thm:low-h}, but does not bound the effort
needed to reach the common configuration. Theorem~\ref{thm:high-h}
requires that the heuristic be perfect only on $B_1$ and characterizes maximum total
expansion counts instead.

\paragraph{Heuristic weighting.}
\citet{pohl1970} studies the relationship between search effort,
heuristic accuracy, and heuristic weighting. Our result in
Section~\ref{sec:scaling} is an application of the nonpathology
criterion of \citet{dechter1985optimality}, restated in
Proposition~\ref{prop:classical-optimum}. Under the stated consistency,
positivity, and goal-preference assumptions, every fixed weight
$0\leq\alpha<1$ makes the expanded set independent of tie-breaking,
whereas unbounded pairwise gaps are possible at $\alpha=1$. This
independence is obtained by making ordinary final-layer states
compulsory, not by reducing expansions relative to ordinary \astar{}.

\paragraph{Expansion-order differences beyond tie-breaking.}

This paper studies differences in expansion order caused by explicit tie-breaking among nodes with equal $f$-value. Similar ordering differences can also arise from other aspects of best-first search. \citet{suzuki2026eagerlazy}  compares eager and lazy duplicate detection in \astar{} and give examples in which duplicate handling changes the order of expansions and produces large differences in expansion count \citep{suzuki2026eagerlazy}. Likewise, in parallel best-first search with a shared \Open, concurrent execution can produce an expansion order different from that of the corresponding sequential search \citep{kuroiwa2020,shimoda2025parallel}. These mechanisms are not tie-breakers in our model,
but they illustrate the broader point that changes in expansion order can have large effects on search effort, and 
in both settings, the divergence constructions are structurally similar to those in Figures~\ref{fig:star} and~\ref{fig:grid-pair}.

\section{Discussion and Conclusions}\label{sec:conclusion}

This paper investigated expansion counts under standard tie-breaking rules for \astar{}.
We showed that on positive-cost final-plateau trees,
none of these nine standard tie-breaking rules dominate the others in general,  
with unbounded additive gaps
in deterministic or expected expansions.
The grid example shows that unbounded additive gaps and expansion-count
ratios between \LowH{\FIFO} and \LowH{\LIFO} also arise on the familiar four-neighbor unit-cost grids with Manhattan distance, with instances favoring
each strategy. %

On the other hand, 
in unit-cost search with $h >0$ at non-goals, assuming accuracy on the low-$h$ part of the final plateau
can yield extremal strategies.  Exact heuristic values on
$B_{\leq k}$ make every low-$h$ strategy (regardless of final tie-breaker) optimal for the remaining search from a
common configuration whose minimum eligible $h$-value is at most $k$.
When this condition holds at plateau entry, the same guarantee yields global expansion optimality. 
Exact heuristic values on $B_1$ make every high-$h$ strategy expansion-maximal,
independently of the final tie-breaker.

Finally, we showed that reducing the heuristic weight below one eliminates tie-breaking sensitivity under the stated assumptions, by makes  all ordinary final-layer states compulsory. %

These results contribute to a theoretical understanding of \astar{} search
on the final plateau. One direction for future work is to identify
additional structural conditions that imply bounded or unbounded expansion
gaps between tie-breaking strategies. Another is to identify broader
conditions under which particular classes of tie-breakers attain minimum
or maximum expansion counts. Finally, while we considered heuristic weights
below one, it would be useful to study tie-breaking among states with equal
weighted evaluation values in weighted \astar{} with heuristic weight
greater than one.

\bibliographystyle{abbrvnat}
\bibliography{tiebreak}
\end{document}